\documentclass[letterpaper, journal]{IEEEtran}
\usepackage{amsmath,amsfonts}
\usepackage{algorithmic}
\usepackage{algorithm}
\usepackage{array}
\usepackage[caption=false,font=normalsize,labelfont=sf,textfont=sf]{subfig}
\usepackage{textcomp}
\usepackage{stfloats}
\usepackage{url}
\usepackage{verbatim}
\usepackage{graphicx}
\usepackage{xcolor}
\usepackage{cite}
\usepackage{bm}
\usepackage{makecell}
\usepackage[labelsep=period]{caption} % 设置caption标签与正文间的分隔符为句点
\usepackage{microtype}  % 优化字符间距
\usepackage{xurl} % 智能处理URL断行
\usepackage{booktabs} % 美化表格
\usepackage{multirow} % 表格多行合并
\usepackage{hyperref}    % 加载超链接宏包
\usepackage[capitalize]{cleveref} % 智能引用并首字母大写
\usepackage{amsthm}      % 加载定理环境宏包
\usepackage{subcaption} % 子图环境
\usepackage{orcidlink}
\usepackage{tabularx} % 用于创建自适应宽度表格
\newtheoremstyle{italicTheorem}
  {\topsep}   % space above
  {\topsep}   % space below
  {\normalfont}  % body font
  {\parindent}       % indent amount
  {\itshape}  % theorem head font
  {:}          % punctuation after theorem head
  {.5em}       % space after theorem head
  {\thmname{#1}\thmnumber{ #2}\thmnote{ (#3)}} % 括号内容也设为斜体

\theoremstyle{italicTheorem}
\newtheorem{theorem}{Theorem}
\newtheorem{lemma}{Lemma} 

\makeatletter
\renewenvironment{proof}[1][\proofname]{\par
  \pushQED{\qed}%
  \normalfont
  \topsep6\p@\@plus6\p@\relax
  \trivlist
  \item[\hskip\labelsep\itshape #1]\ignorespaces
}{%
  \popQED\endtrivlist\@endpefalse
}
\makeatother
\renewcommand{\proofname}{Proof:}

\begin{document}
\bstctlcite{IEEEexample:BSTcontrol}

\title{Joint Interference Detection and Identification via Adversarial Multi-task Learning}
% \\ new line

\author{
Haitao Xu\orcidlink{0009-0009-7667-0522}, Boxiang He\orcidlink{0000-0002-9235-1144}, and Shilian Wang\orcidlink{0000-0003-4132-8750}
\thanks{H. Xu, B. He, and S. Wang  are with the College of Electronic Science and Technology, National University of Defense Technology, Changsha 410003, P. R. China. (email: HaitaoXu1997@outlook.com; boxianghe1@bjtu.edu.cn; wangsl@nudt.edu.cn).}
}

% \IEEEpubid{0000--0000/00\$00.00~\copyright~2021 IEEE}
% Remember, if you use this you must call \IEEEpubidadjcol in the second column for its text to clear the IEEEpubid mark.

\maketitle
\begin{abstract}
Precise interference detection and identification are crucial for enhancing the survivability of communication systems in non-cooperative wireless environments. 
While deep learning (DL) has advanced this field, existing single-task learning (STL) approaches neglect inherent task correlations. 
Furthermore, emerging multi-task learning (MTL) methods often lack a theoretical foundation for quantifying and modeling task relationships. 
To bridge this gap, we establish a theoretically grounded MTL framework for joint interference detection, modulation identification, and interference identification. 
First, we derive an upper bound for the weighted expected loss in MTL frameworks. 
This bound explicitly connects MTL performance to task similarity, quantified by the Wasserstein distance and learnable task relation coefficients. 
Guided by this theory, we present the adversarial multi-task interference detection and identification network (AMTIDIN), 
which integrates adversarial training to minimize distributional discrepancies across tasks and uses adaptive coefficients to model task correlations dynamically. 
Crucially, we conducted a quantitative analysis of task similarity to reveal intrinsic task relationships, specifically that modulation identification and interference identification share a substantial feature overlap distinct from interference detection.
Extensive comparative experiments demonstrate that AMTIDIN significantly outperforms both its task-specific STL baseline and state-of-the-art MTL baselines in robustness and generalization, particularly under challenging conditions with limited training data, short signal lengths, and low signal-to-noise ratios (SNRs).
\end{abstract}

\begin{IEEEkeywords}
Adversarial multi-task learning, anti-interference communication, interference detection and identification, task relation coefficients, Wasserstein distance.
\end{IEEEkeywords}

\section{Introduction}
\label{sec:introduction}

% 背景介绍
\IEEEPARstart{O}{wing} to the inherent openness of wireless channels in non-cooperative wireless communication environments, ranging from military communication networks to unlicensed spectrum sharing scenarios, the electromagnetic spectrum is becoming increasingly congested and contested \cite{He2025Towards}.
Consequently, communication systems are highly susceptible to a diverse array of threats, including unintentional interference and malicious jamming attacks, which severely compromise the reliability and survivability of wireless links \cite{Wang2020Dynamic, Pirayesh2022Jamming}.
To mitigate these risks, advanced anti-interference strategies, such as interference avoidance, suppression, and elimination, have been widely developed. 
However, the effective implementation of these countermeasures fundamentally relies on precise interference detection and identification to capture and characterize the specific interference patterns beforehand.

% 传统方法及其局限性
Traditional methodologies are broadly categorized into likelihood-based (LB) and feature-based (FB) paradigms.
LB approaches typically formulate the problem within a statistical hypothesis testing framework.
For instance, Ref. \cite{Zhao2017Discrimination} developed a discrimination scheme based on the generalized likelihood ratio test (GLRT) under the Neyman-Pearson criterion, using the distinct spatial subspace properties of signal vectors to distinguish deception jamming from radar targets. 
However, these methods are often constrained by prohibitive computational complexity and the necessity for precise prior knowledge, such as channel state information (CSI) or specific jamming parameters.
Conversely, FB approaches rely on extracting domain-expert features, including amplitude, phase, high-order cumulants \cite{Wang2018Leveraging}, cyclic spectrum \cite{Yan2020Robust}, and time-frequency characteristics \cite{Wang2014Timefrequency}.
While multi-feature fusion can enhance identification accuracy \cite{Zuo2021Recognition, Wang2017Interference}, this paradigm remains fundamentally limited by its dependence on laborious manual feature engineering, which significantly increases system complexity and restricts adaptability to dynamic environments.

\IEEEpubidadjcol  % 双栏底部对齐  第二栏调用，防止排版错误

% 单任务深度学习方法及其局限性
Deep learning (DL) has emerged as a potent tool for interference detection and identification, leveraging its ability to autonomously extract high-dimensional representations from raw data.
Existing research has extensively explored \mbox{DL-based} single-task learning (STL) paradigms.
For interference detection, approaches typically employ architectures such as convolutional neural networks (CNNs), convolutional long short-term memory (ConvLSTM) networks, or object detection frameworks to identify signal presence or localize specific waveforms like frequency-hopping signals \cite{Gao2019Deep, Wang2023ConvLSTMbased, Prasad2020Downscaled}.
In the realm of modulation identification, methodologies have progressed from supervised models using LSTM networks \cite{Rajendran2018Deep} or hybrid architectures \cite{Qiu2024DeepSIG} to advanced semi-supervised and unsupervised contrastive learning paradigms designed to tackle label scarcity \cite{Kong2023Transformerbased, Li2025Multirepresentation}.
Regarding interference identification, research has advanced from pioneering CNN-based classifiers \cite{Schmidt2017Wireless} to efficient lightweight architectures \cite{Wang2025Intelligent} and sophisticated multi-domain \cite{Wang2022Multidomain} or multi-modal \cite{Wang2024Wireless} feature fusion frameworks.
Despite these achievements, treating these intrinsically coupled tasks in isolation overlooks their latent correlations.
By failing to model the shared semantic information among these tasks, STL paradigms forego the potential for mutual enhancement, leading to suboptimal generalization and robustness in complex \mbox{electromagnetic environments}.

% 多任务学习的潜力及其局限性
Recognizing the limitations of STL, researchers have increasingly explored multi-task learning (MTL) for its potential to enhance generalization by leveraging shared information across related tasks.
This paradigm has demonstrated significant potential in various wireless scenarios, including joint signal detection and modulation recognition \cite{Zhang2023JDMRnet, Xing2024Joint}, 
open-set recognition via auxiliary tasks \cite{Gong2022Multitask}, 
and efficient interference identification in adversarial environments \cite{Zhao2025Low}.
Despite these empirical successes, existing approaches lack a rigorous theoretical foundation for quantifying and modeling task relationships.
Consequently, the design of MTL frameworks often relies on heuristic loss weighting or implicit feature alignment, leaving the mathematical relationships among interference detection and identification tasks largely unexplained.

% 机器学习社区中任务相似性的测量方法
Beyond architectural innovations, the efficacy of MTL fundamentally hinges on the rigorous quantification of task relationships to govern feature sharing. 
In the broader machine learning community, methodologies have evolved from heuristic loss weighting \cite{Murugesan2016Adaptive, Pentina2017Multitask} to sophisticated adversarial alignment employing distribution metrics such as H-divergence \cite{Ben-David2010Theory} or Wasserstein distance \cite{Redko2017Theoretical, Shen2018Wasserstein}.
Recent theoretical advancements have further derived generalization bounds based on these metrics to guide model design \cite{Shui2019Principled, zhou2021task}.
However, in the specific realm of interference detection and identification, the explicit quantification of task similarity remains a largely under-investigated open problem, thereby limiting the development of theoretically grounded MTL frameworks.

% 论文贡献
To bridge this gap, we establish a theoretically grounded MTL framework that unifies interference detection, modulation identification, and interference identification.
Departing from heuristic designs, we explicitly derive an upper bound connecting the weighted expected loss in MTL to task similarity, which serves as the theoretical foundation for our proposed adversarial multi-task interference detection and identification network (AMTIDIN).
By rigorously aligning feature distributions and modeling task correlations, AMTIDIN offers a robust solution for joint interference detection and identification.
The main contributions of this work are summarized as follows:
\begin{itemize}
    \item 
    We derive a theoretical upper bound for the weighted expected loss in MTL frameworks, explicitly bridging the generalization performance with task similarity through the Wasserstein distance and learnable task relation coefficients. 
    This provides a rigorous justification for aligning feature distributions across tasks and offers mathematical guidance for designing effective MTL frameworks.
    \item 
    Guided by the theoretical analysis, we propose AMTIDIN for joint interference detection, modulation identification, and interference identification. 
    This framework minimizes the derived upper bound by integrating adversarial training to reduce the Wasserstein distance between task feature distributions
    and employing learnable coefficients to adaptively model the underlying correlations among tasks.
    \item 
    We establish a principled approach to quantify task similarity via the Wasserstein distance and task relation coefficients. 
    This method transcends heuristic assumptions and offers interpretable insights into the intrinsic relationships among the three tasks, elucidating the mechanism of performance gains in our MTL framework.
    \item 
    Extensive experiments verify that the achieved performance gains stem from our superior multi-task design rather than architectural complexity. 
    We demonstrate that AMTIDIN significantly outperforms its task-specific STL baseline and state-of-the-art MTL baselines, exhibiting exceptional robustness and generalization, 
    particularly under challenging conditions characterized by limited training data, short signal lengths, and low SNRs.
\end{itemize}

The remainder of this paper is organized as follows: 
Section \ref{sec:system_model} presents the system model and formulates the joint interference detection and identification problem. 
Section \ref{sec:theoretical_analysis} provides a comprehensive theoretical analysis, where we derive an upper bound for the MTL framework and quantify task similarity. 
Building on these theoretical insights, Section \ref{sec:proposed_framework} details the proposed AMTIDIN architecture and its algorithmic implementation.
Section \ref{sec:experimental_results} presents the experimental setup and a comprehensive performance analysis. 
Finally, Section \ref{sec:conclusion} concludes the paper and outlines future research directions.

\emph{Notation:} Throughout this paper, scalars, vectors, and matrices are denoted by lower-case italic letters $x$, bold lower-case italic letters $\bm{x}$, and bold capital italic letters $\bm{X}$, respectively. 
A random variable and its realization are respectively written as $\mathsf{x}$ and $x$. $\mathcal{CN}\left({\mu},\sigma^2\right)$ denotes the probability density function of a random variable following the complex Gaussian distribution with the mean $\mu$ and the variance $\sigma^2$.
The operators $[\cdot]^{\mathsf{T}}$, $[\cdot]^{\dagger}$, $\text{log}\{\cdot\}$, $|\cdot|$, and $\|\cdot\|$ denote the transpose, the Hermitian, the logarithm, the modulus, and the Euclidean norm of their arguments, respectively. The symbol $*$ denotes the convolution operation. $\mathbb{I}(\cdot)$ is the indicator function.
$[\cdot]^{+}$ is defined as $\text{max}(\cdot,0)$. $\bm{I}_n$ is the $n$ by $n$ identity matrix, where the subscript is omitted when the matrix dimension is clear. 
Define $\mathcal{I}_N=\{1, 2, \dots, N\}$ as a shorthand for the index set and let $j=\sqrt{-1}$. $\mathbb{E}\{\cdot\}$ denotes the expectation with respect to all the randomness. $\Re\left\{\cdot\right\}$ and $\Im\left\{\cdot\right\}$ are the real part and the imaginary part of the complex number, respectively.

\section{System Model and Problem Formulation}
\label{sec:system_model}
In this section, we first present the system model.
Subsequently, we mathematically formulate the joint interference detection, modulation identification, and interference identification within a unified MTL framework.

\subsection{System Model}
As illustrated in Fig. \ref{fig-SystemModel}, the received complex baseband signal at the intelligent interference monitoring receiver can be modeled as
\begin{equation}
\bm{r} = g \bm{s}_{m,i} + \bm{n},
\end{equation}
where $g \in \mathbb{C}$ is the fading channel; 
$\bm{n} \sim \mathcal{CN}\left(\bm{0}, \sigma^2 \bm{I}_N\right)$ represents the additive white Gaussian noise (AWGN);
$\bm{s}_{m,i}$ is the transmitted interference signal characterized by $m$-th modulation type and $i$-th interference type.
After preprocessing, the received signal matrix can be further represented as
\begin{equation}
    \bm{X} = \begin{bmatrix}
        \bm{r}_I^\mathsf{T} \\
        \bm{r}_Q^\mathsf{T}
    \end{bmatrix} = \begin{bmatrix}
        r_I[1] & r_I[2] & \cdots & r_I[N] \\
        r_Q[1] & r_Q[2] & \cdots & r_Q[N]
    \end{bmatrix},
    \label{eq:iq_matrix}
\end{equation}
where $\bm{r}_I$ and $\bm{r}_Q$ are the in-phase and quadrature components, respectively; $N$ is the signal length.

\begin{figure}[!ht]
\centering
\includegraphics[width=0.99\linewidth]{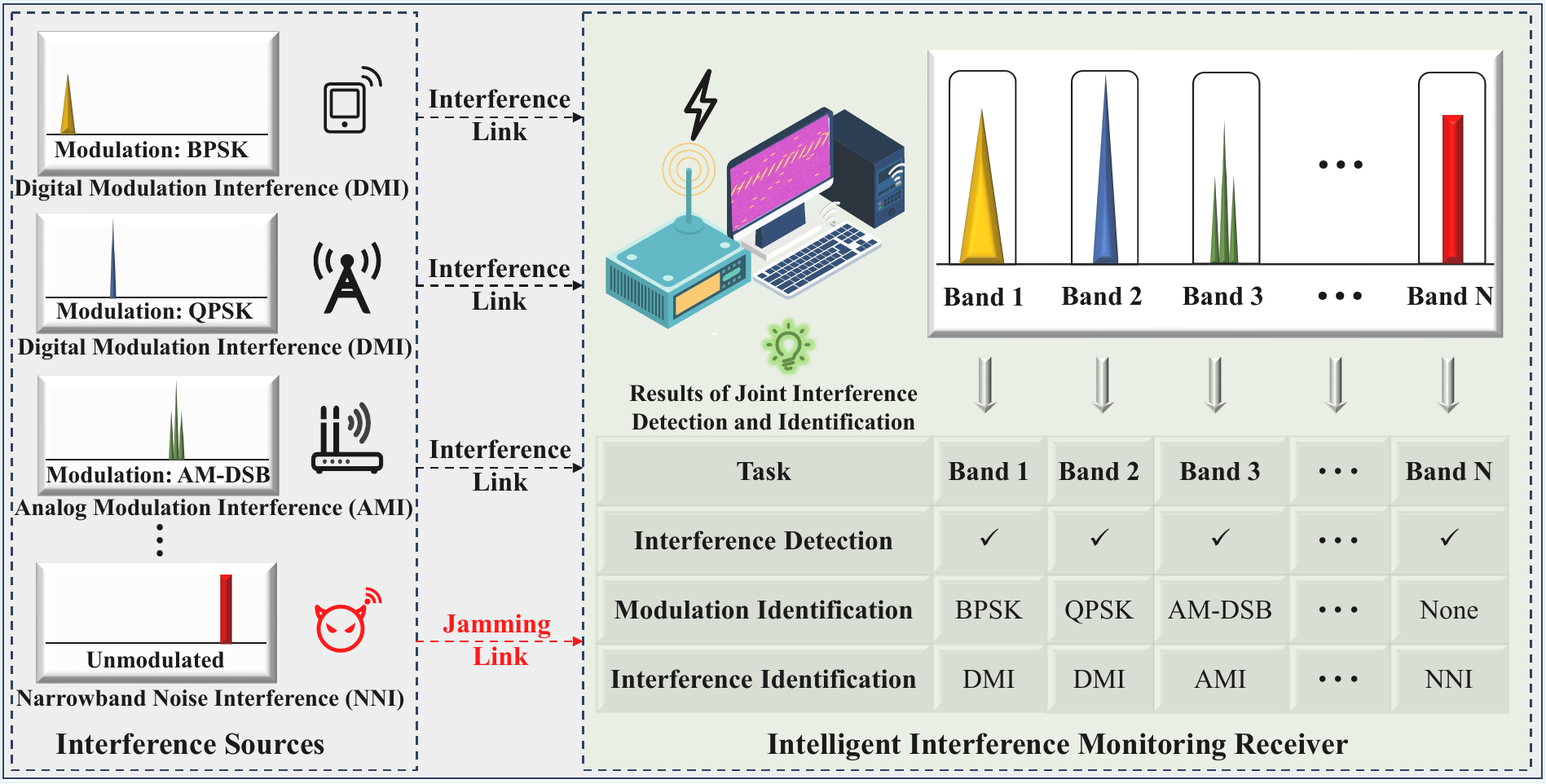}
\caption{System model for joint interference detection and identification within a non-cooperative electromagnetic spectrum environment.
The intelligent interference monitoring receiver monitors a spectrum environment containing diverse interference and jamming links. 
The receiver operates during a spectrum monitoring phase, such as a collaborative quiet period or an unoccupied frequency band, where legitimate communication is absent.}
\label{fig-SystemModel}
\end{figure}

\subsection{Problem Formulation}
Our goal is to design a multi-task model $\mathcal{G}$ for an intelligent interference monitoring receiver to achieve interference detection, modulation identification, and interference identification simultaneously, which can be mathematically expressed by
\begin{equation}
\mathcal{G}: \bm{X} \rightarrow \{\hat{y}^\text{ID}, \hat{y}^\text{MI}, \hat{y}^\text{II}\},
\end{equation}
where $\hat{y}^\text{ID} \in \{1, 2\}$ is the predicted label indicating the presence or absence of interference; 
$\hat{y}^\text{MI} \in \mathcal{I}_M$ denotes the predicted modulation type of the interference signal;
$\hat{y}^\text{II} \in \mathcal{I}_I$ is the predicted interference type.

To obtain the optimal model $\mathcal{G}^*$, the intuitive approach is to maximize the joint posterior probability of the corresponding labels given the input $\bm{X}$, which can be formulated as
\begin{equation}
\label {equation-joint optimization}
\mathcal{G}^* = \arg\max_{\mathcal{G}} P\left(y^\text{ID}, y^\text{MI}, y^\text{II} | \bm{X};\mathcal{G}\right),   
\end{equation}
where $y^{\text{ID}}$, $y^{\text{MI}}$, and $y^{\text{II}}$ are the corresponding ground-truth labels for each task.

However, because of the complex relationships among the tasks and the high dimensionality of the input interference signal matrix $\bm{X}$, directly modeling the joint posterior distribution $P(y^\text{ID}, y^\text{MI}, y^\text{II} | \bm{X};\mathcal{G})$ is computationally intractable.
To address this, we use an MTL framework to tackle the optimization problem \eqref{equation-joint optimization}. 
The core strategy is to reformulate problem \eqref{equation-joint optimization} into finding the optimal set of hypotheses $\{h^*_\text{ID}, h^*_\text{MI}, h^*_\text{II}\} \subset \mathcal{H}$ within the MTL framework, where $\mathcal{H}$ is the hypothesis space.
To this end, we minimize the weighted expected loss across all tasks, which is mathematically expressed as
\begin{align}
\label{equation-MTL-optimization}
\mathcal{G}^* & = \{h_\text{ID}^*, h_\text{MI}^*, h_\text{II}^*\} \notag \\
&= \mathop{\arg\min}_{\{h_\text{ID}, h_\text{MI}, h_\text{II}\} \subset \mathcal{H}} \lambda_\text{ID} \mathcal{L}_\text{ID}\left(h_\text{ID}\right) \notag \\
&\quad + \lambda_\text{MI} \mathcal{L}_\text{MI}\left(h_\text{MI}\right) + \lambda_\text{II} \mathcal{L}_\text{II}\left(h_\text{II}\right),
\end{align}
where $\lambda_\text{ID}$, $\lambda_\text{MI}$, and $\lambda_\text{II}$ are the task-specific weights indicating the importance of each task in the overall learning objective.
Here, the expected loss for interference detection is defined as
\begin{equation}
\mathcal{L}_\text{ID}\left(h_\text{ID}\right) = \mathbb{E}_{\left(\bm{X}, y^\text{ID}\right) \sim \mathcal{D}^\text{ID}} \left[l\left(h_\text{ID}\left(\bm{X}\right), y^\text{ID}\right)\right],
\end{equation}
where $\left(\bm{X}, y^\text{ID}\right)$ is the input-output pair sampled from the underlying distribution $\mathcal{D}^\text{ID}$ of interference detection task;
$l$ is the loss function. 
Modulation identification and interference identification can also be defined similarly, but for brevity, we omit the details here.

Classical MTL approaches typically approximate the optimization problem \eqref{equation-MTL-optimization} by minimizing the weighted empirical loss over the training dataset.
However, this approach leaves the mathematical relationships among tasks largely unexplained.
To solve this, we establish a rigorous theoretical framework that explicitly quantifies and models task relationships to enhance MTL performance.

\section{Theoretical Analysis}
\label{sec:theoretical_analysis}
In this section, we first introduce the Wasserstein-1 distance and the task relation coefficients as metrics to quantify task similarity. We next derive an upper bound for the weighted expected loss across tasks, providing a theoretical foundation for our adversarial MTL framework.

\subsection{Metrics for Task Similarity}
To measure task similarity from a distributional perspective, we use the Wasserstein-1 distance \cite{Shen2018Wasserstein, Arjovsky2017Wasserstein}, defined as
\begin{equation}
\label{equation-Wasserstein distance}
\!W_1\left(\mathcal{D}^t, \mathcal{D}^i\right) \! = \!\!\! \sup_{f: \|f\|_{\text{Lip}} \leq 1} \!\!\!\! \mathbb{E}_{\bm{X}^t \sim \mathcal{D}^t} f\left(\bm{X}^t\right) - \mathbb{E}_{\bm{X}^i \sim \mathcal{D}^i} f\left(\bm{X}^i\right), 
\end{equation}
where $\|f\|_{\text{Lip}} = \sup_{\bm{X}^t \neq \bm{X}^i} \frac{|f\left(\bm{X}^t\right) - f\left(\bm{X}^i\right)|}{\|\bm{X}^t - \bm{X}^i\|_2} \leq 1$ means $f$ is a 1-Lipschitz function;
$\bm{X}^t$ and $\bm{X}^i$ are samples drawn from the underlying distributions $\mathcal{D}^t$ and $\mathcal{D}^i$ of tasks $t$ and $i$, respectively.
From the perspective of task relationships, we also introduce the learnable task relation coefficients to further quantify task similarity \cite{Murugesan2016Adaptive}.
For each task $t$, we define a task relation coefficient vector $\bm{\alpha}_t \in \Delta_T$, where $\Delta_T$ is the $T$-dimensional probability simplex.
Each element $\alpha_{t,i}$ in $\bm{\alpha}_t$ represents the contribution of task $i$ to task $t$, with larger values indicating a stronger relationship between the two tasks.

\subsection{Upper Bound for Weighted Expected Loss}
In this subsection, we derive an upper bound for the weighted expected loss across tasks in the MTL framework using the Wasserstein distance and task relation coefficients.

\begin{theorem}
\label{theorem-Upper bound}
For any set of task relation coefficient vectors \( \{\bm{\alpha}_t \in \Delta_T\}_{t=1}^T \) and any confidence parameter $ \delta \in \left(0, 1\right) $, with probability at least \( 1 - \delta \), 
the upper bound for the weighted expected loss can be given by
 
\begin{align}
\label{equation-Upper bound}
\underbrace{\sum_{t=1}^T\lambda_t \mathcal{L}_t(h_t)}_{\text{Weighted expected loss}} & \leq \underbrace{\sum_{t=1}^T\lambda_t \hat{\mathcal{L}}_{\bm{\alpha}_t}(h_t)}_{\text{Weighted empirical loss}} + \underbrace{ C_1 \sum_{t=1}^T \lambda_t \sqrt{\sum_{i=1}^T \frac{\alpha_{t,i}^2}{\beta_i}}}_{\text{Coefficient Regularization}} \notag \\
&\quad + \underbrace{2K \sum_{t=1}^T \lambda_t \sum_{i=1}^{T} \alpha_{t,i} W_1\left(\hat{\mathcal{D}}^i, \hat{\mathcal{D}}^t\right)}_{\text{Weighted empirical Wasserstein distance}} \notag \\
&\quad + \underbrace{C_2 + \sum_{t=1}^T \lambda_t \sum_{i=1}^{T} \alpha_{t,i} \xi_{i,t}}_{\text{Complexity \& optimal expected loss}}, 
\end{align}   
where
\begin{align}
C_1 &= 2\sqrt{\frac{2\left(d\log\left(\frac{2em}{d}\right) + \log\left(\frac{16T}{\delta}\right)\right)}{m}}; \\
C_2 &= 2K\sum_{t=1}^T \lambda_t \sum_{i=1}^{T} \alpha_{t,i} \gamma_{i,t},
\end{align}
with
\begin{align}
\gamma_{i,t} & = A_i m_i^{-1/s} + A_t m_t^{-1/s}  \notag \\
&\quad + \sqrt{\frac{1}{2}\log\left(\frac{4T^2}{\delta}\right)} \left( \sqrt{\frac{1}{m_i}} + \sqrt{\frac{1}{m_t}} \right);
\end{align}
$\hat{\mathcal{L}}_{\alpha_t}(h_t)$ is the empirical loss for task $t$ weighted by the task relation coefficients $\bm{\alpha}_t$, which is defined as
\begin{equation}
\hat{\mathcal{L}}_{\bm{\alpha}_t}(h_t) = \sum_{i=1}^T \alpha_{t,i} \hat{\mathcal{L}}_i(h_t),
\end{equation}
with
\begin{equation}
\hat{\mathcal{L}}_i(h_t) = \frac{1}{m_i} \sum_{j=1}^{m_i} l\left(h_t\left(\bm{X}_j^i\right), y_j^i\right)
\end{equation}
being the empirical loss for task $i$ with respect to hypothesis $h_t$.
Here, $\mathcal{H}$ is the hypothesis space of functions mapping the input space $\mathcal{X}$ to $[0, 1]$ with a finite pseudo-dimension $d$;
\( \{h_t \in \mathcal{H}\}_{t=1}^T \) are at most \( K \)-Lipschitz continuous;
\( \lambda_t \geq 0 \) denotes the task-specific weight for task \( t \), satisfying \( \sum_{t=1}^T \lambda_t = 1 \);
\( \hat{\mathcal{D}}^t \) is the empirical distribution sampled from the underlying distribution \( \mathcal{D}^t \);
\(\beta_i = \frac{m_i}{m}\) is the sample proportion for task \( i \);
$m=\sum_{i=1}^{T}m_i$ is the total number of samples across all tasks;
$s$, $A_t$, and $A_i$ are the specified constants; 
$\xi_{t,i} = \inf_{h \in \mathcal{H}}\left[\mathcal{L}_t\left(h\right) + \mathcal{L}_i\left(h\right)\right]$ is the joint expected minimal loss for tasks \( t \) and \( i \).
\end{theorem}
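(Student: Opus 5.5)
The bound will be assembled from three ingredients, following the Wasserstein-based multi-task analysis of Shui et al. \cite{Shui2019Principled}. First, a single-pair domain-transfer inequality bounds $\mathcal{L}_t(h_t)$ by $\mathcal{L}_i(h_t)$ plus a Wasserstein term. Second, a uniform convergence bound for the $\bm{\alpha}_t$-weighted empirical loss under the pseudo-dimension assumption. Third, concentration of the empirical Wasserstein distance around the true one.

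\textbf{Pairwise transfer.} Fix $t$ and $i$, and let $h^*_{t,i}$ attain (or approximate) $\xi_{t,i}$. By the triangle inequality for the loss,
\[
\mathcal{L}_t(h_t)\le \mathcal{L}_t(h^*_{t,i})+\mathbb{E}_{\mathcal{D}^t}|h_t-h^*_{t,i}|,
\]
and similarly on $\mathcal{D}^i$. The function $|h_t-h^*_{t,i}|$ is $2K$-Lipschitz, so by the dual form \eqref{equation-Wasserstein distance},
\[
\mathbb{E}_{\mathcal{D}^t}|h_t-h^*_{t,i}|-\mathbb{E}_{\mathcal{D}^i}|h_t-h^*_{t,i}|\le 2K\,W_1(\mathcal{D}^t,\mathcal{D}^i).
\]
Combining these gives
\[
\mathcal{L}_t(h_t)\le \mathcal{L}_i(h_t)+2K\,W_1(\mathcal{D}^i,\mathcal{D}^t)+\xi_{i,t}.
\]
This step assumes the loss is $|\cdot|$-like, i.e.\ symmetric and satisfying the triangle inequality, which is the standard assumption in this literature. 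Multiplying by $\alpha_{t,i}$ and summing over $i$, using $\bm{\alpha}_t\in\Delta_T$, yields
\[
\mathcal{L}_t(h_t)\le \sum_i\alpha_{t,i}\mathcal{L}_i(h_t)+2K\sum_i\alpha_{t,i}\bigl[W_1(\mathcal{D}^i,\mathcal{D}^t)+\xi_{i,t}\bigr].
\]

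\textbf{Generalization of the weighted loss.} Next I bound $\sum_i\alpha_{t,i}\mathcal{L}_i(h_t)-\hat{\mathcal{L}}_{\bm{\alpha}_t}(h_t)$ uniformly over $h_t\in\mathcal{H}$. This is the $\alpha$-weighted empirical risk on $m$ samples drawn from a mixture, as in Ben-David et al.\ \cite{Ben-David2010Theory}. I would apply McDiarmid's inequality: each sample of task $i$ changes the weighted loss by at most $\alpha_{t,i}/m_i=\alpha_{t,i}/(\beta_i m)$, so the sum of squared differences is $\sum_i\alpha_{t,i}^2/(\beta_i m)$. Symmetrization plus Sauer's lemma for pseudo-dimension $d$ then gives the growth-function factor $d\log(2em/d)$. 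The result is a deviation of $C_1\sqrt{\sum_i\alpha_{t,i}^2/\beta_i}$ with probability at least $1-\delta/(2T)$. The $16T/\delta$ inside $C_1$ comes from the ghost-sample factor combined with this per-task share of $\delta$.

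\textbf{Empirical Wasserstein.} By the triangle inequality for $W_1$,
\[
W_1(\mathcal{D}^i,\mathcal{D}^t)\le W_1(\hat{\mathcal{D}}^i,\hat{\mathcal{D}}^t)+W_1(\mathcal{D}^i,\hat{\mathcal{D}}^i)+W_1(\mathcal{D}^t,\hat{\mathcal{D}}^t).
\]
Each term $W_1(\mathcal{D}^i,\hat{\mathcal{D}}^i)$ is controlled by two facts. Its expectation is at most $A_i m_i^{-1/s}$ by the standard rate for empirical measures (Weed--Bach or Fournier--Guillin, which explains the unspecified constants $A_i$ and $s$). It concentrates via McDiarmid with deviation $\sqrt{\tfrac12\log(4T^2/\delta)/m_i}$, since the Lipschitz dual with bounded support gives bounded differences. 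Union bounding over the at most $T^2$ ordered pairs with budget $\delta/2$ yields the $4T^2/\delta$ term and exactly $\gamma_{i,t}$.

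Finally, I multiply each per-task bound by $\lambda_t$, sum over $t$ using $\sum_t\lambda_t=1$, and take a union bound over both families of events, each with budget $\delta/2$, to get \eqref{equation-Upper bound} with probability $1-\delta$. The main obstacle is the second step: getting the exact constant $C_1$ and its $\log(16T/\delta)$ requires carefully carrying the $\alpha$-weighted McDiarmid bound through symmetrization with a pseudo-dimension growth function. I would follow Ben-David et al.'s Theorem 3 proof (Lemma 4/5 there) almost verbatim, replacing VC dimension with pseudo-dimension. A secondary issue is boundedness: the McDiarmid bounded-differences constant for the empirical Wasserstein concentration needs the input space to have diameter normalized to 1, which I would state as an implicit assumption.
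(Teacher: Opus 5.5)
Your proposal is correct and follows essentially the same route as the paper. The shared ingredients are the pairwise transfer through $h^*_{t,i}$ with the $2K$-Lipschitz dual bound, the triangle inequality plus Weed--Bach concentration for $W_1$, the $\bm{\alpha}$-weighted pseudo-dimension generalization bound, and a $\delta/2$--$\delta/2$ union-bound split that produces $\log(4T^2/\delta)$ and $\log(16T/\delta)$; the only difference is that you sketch re-derivations of the two cited lemmas rather than quoting them.

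One bookkeeping slip needs fixing. When you average the pairwise inequality over $i$, the $\xi_{i,t}$ term must stay outside the factor $2K$, giving $\mathcal{L}_t(h_t)\le\sum_i\alpha_{t,i}\mathcal{L}_i(h_t)+2K\sum_i\alpha_{t,i}W_1(\mathcal{D}^i,\mathcal{D}^t)+\sum_i\alpha_{t,i}\xi_{i,t}$, as your preceding line already shows. As written, you would not recover the stated bound.
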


\textit{Proof:} See Appendix A.

% \autoref{theorem-Upper bound} 定理引用

\textit{Remark 1:} The weighted empirical loss and the weighted empirical Wasserstein distance terms jointly control the task relation coefficients $\{\bm{\alpha}_t \}_{t=1}^T$.
Specifically, when the empirical loss for task $i$ is small with respect to hypothesis $h_t$, the task relation coefficient $\alpha_{t,i}$ tends to be larger, indicating a stronger relationship between task $t$ and task $i$.
Similarly, when the Wasserstein distance between task $i$ and task $t$ is small, the task relation coefficient $\alpha_{t,i}$ also tends to be larger, indicating a closer similarity between the two tasks.
Therefore, as illustrated in Fig. \ref{fig-TaskSimilarity}, the task similarity of tasks $t$ and $i$ can be quantitatively measured by the Wasserstein distance $W_1\left(\hat{\mathcal{D}}^i, \hat{\mathcal{D}}^t\right)$ and the task relation coefficient $\alpha_{t,i}$.

\begin{figure}[!t]
\centering
\includegraphics[width=0.99\linewidth]{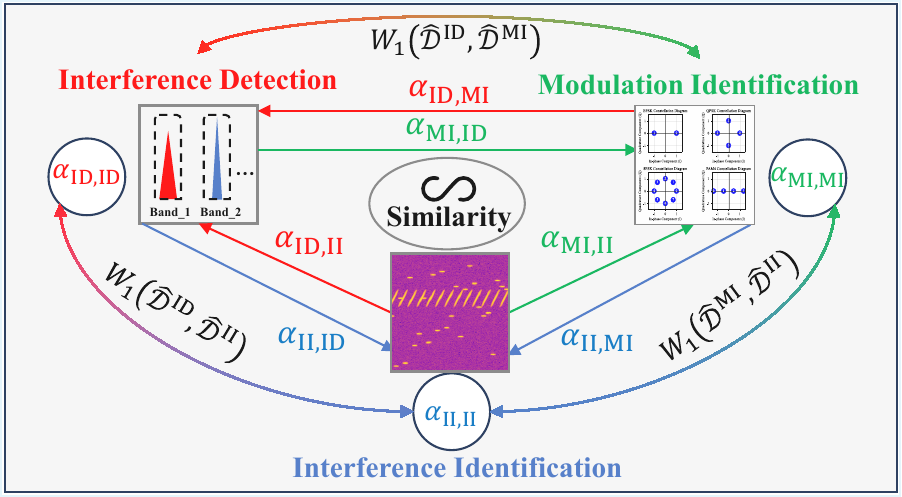}
\caption{Quantitative measurement of task similarity using the Wasserstein distance and task relation coefficients.}
\label{fig-TaskSimilarity}
\end{figure}

\section{Proposed AMTIDIN for Joint Interference Detection and Identification}
\label{sec:proposed_framework}
In this section, we first transform the original optimization problem into minimizing the derived upper bound of the weighted expected loss.
We next present the overall architecture design for
AMTIDIN and detail its algorithmic implementation.
% The overall architecture and data flow during training of the proposed AMTIDIN are illustrated in Fig. \ref{fig-OverallArchitecture}.
% Guided by the theoretical results established in Section \ref{sec:theoretical_analysis}, 
% we propose the AMTIDIN to jointly achieve interference detection, modulation identification, and interference identification.
% The fundamental objective of AMTIDIN is to simultaneously minimize the weighted empirical loss and the empirical Wasserstein distance across tasks, 
% while adaptively learning the task relation coefficients $\{\bm{\alpha}_t\}_{t=1}^T$. 

\begin{figure*}[!t]
\centering
\includegraphics[width=0.99\linewidth]{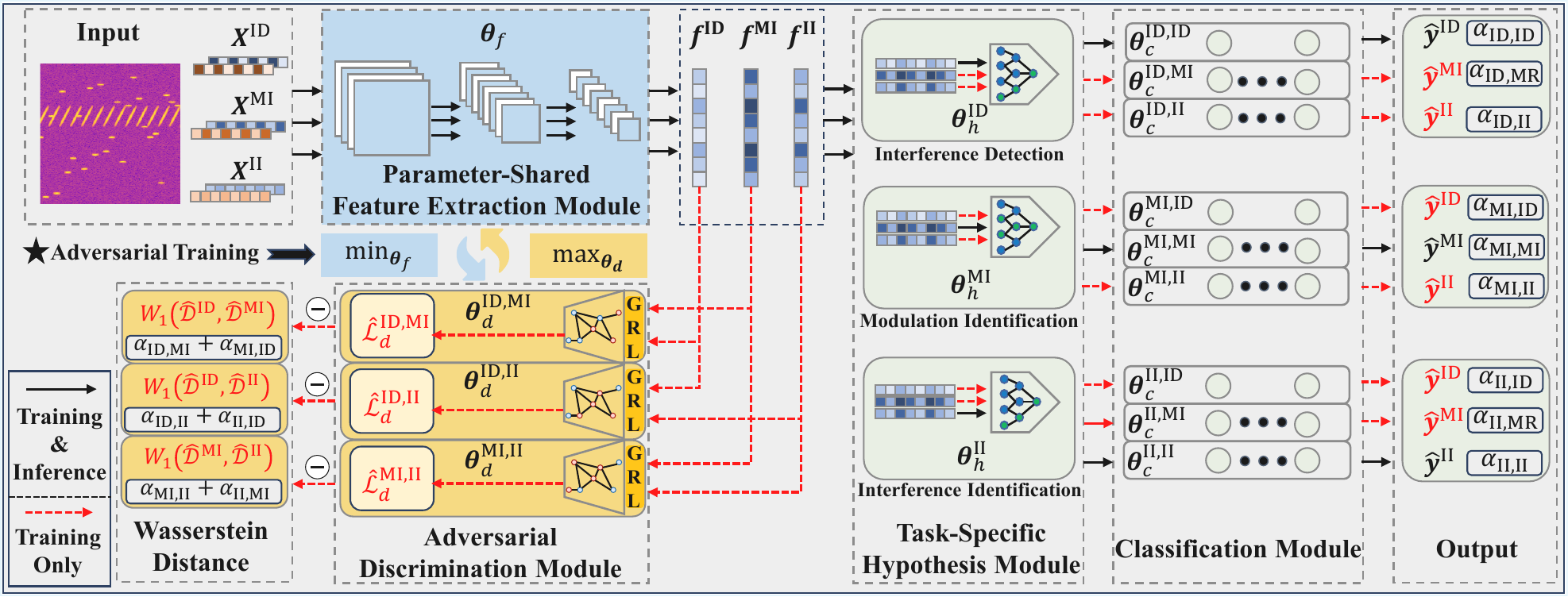}
\caption{
Overall architecture and data flow of the proposed AMTIDIN.
The adversarial discrimination module (yellow block) aims to maximize the empirical Wasserstein distance to distinguish task features, 
while the feature extraction module (blue block) and task-specific hypothesis module (green block) work together to minimize the empirical Wasserstein distance to align task features.
The black arrows denote the data flow operative during both training and inference phases, 
while the red arrows represent the data flow active exclusively during the training phase.}

\label{fig-OverallArchitecture} 
\end{figure*}

\subsection{Problem Transformation}
Theorem \ref{theorem-Upper bound} shows that the weighted expected loss in \eqref{equation-MTL-optimization} is upper-bounded by the sum of the weighted empirical loss, the weighted empirical Wasserstein distance, and the coefficient regularization term, 
as well as the constant complexity and optimal expected loss terms.
\footnote{The complexity and optimal loss terms quantify the model's intrinsic capacity and the fundamental difficulty of jointly learning tasks, respectively. 
$C_2$ represents the complexity term determined by the capacity and structure of the chosen hypothesis space $\mathcal{H}$. 
Since we consider the hypothesis family to be fixed, this term remains constant. 
$\frac{1}{T} \sum_{t=1}^T \sum_{i=1}^T \alpha_{t,i} \xi_{t,i}$ represents the weighted average of the joint minimal expected loss across tasks, which quantifies the fundamental capability of the model architecture to learn and represent pairs of tasks jointly. 
We assume that $\xi_{t,i}$ is much smaller than the empirical term, indicating that the hypothesis family $\mathcal{H}$ can learn the multiple tasks with a small expected loss. 
To this end, we treat the two terms as constants during optimization.}
% This theoretical insight suggests that tightening this upper bound serves as a more effective surrogate for minimizing the true expected loss than relying solely on the empirical loss. 
Consequently, we transform the original optimization problem \eqref{equation-MTL-optimization} into minimizing the upper bound derived in \eqref{equation-Upper bound}, which is mathematically expressed by
\begin{align}
\label{equation-optimization_problem_original}
\{h_\text{ID}^*, h_\text{MI}^*, h_\text{II}^*\} & = \mathop{\arg\min}_{\{h_\text{ID}, h_\text{MI}, h_\text{II}\} \subset \mathcal{H}} \sum_{t=1}^3 \lambda_t \hat{\mathcal{L}}_{\bm{\alpha}_t}\left(h_t\right) \notag \\
&\quad + 2K \sum_{t=1}^3 \lambda_t \sum_{i=1}^{3} \alpha_{t,i} W_1\left(\hat{\mathcal{D}}^i, \hat{\mathcal{D}}^t\right) \notag \\
&\quad + C_1 \sum_{t=1}^3 \lambda_t \sqrt{\sum_{i=1}^3 \frac{\alpha_{t,i}^2}{\beta_i}}.
\end{align}
Problem \eqref{equation-optimization_problem_original} reformulates the original objective into a tractable empirical loss minimization problem that explicitly accounts for task correlations. 
% Problem \eqref{equation-optimization_problem_original} implies that we can transform the original expected loss minimization problem into an empirical loss minimization problem that explicitly accounts for task \mbox{correlations}.

\subsection{Overall Architecture Design for AMTIDIN}
To solve problem \eqref{equation-optimization_problem_original}, we design AMTIDIN, which parameterizes the task hypotheses $\{h_\text{ID}, h_\text{MI}, h_\text{II}\}$ and the adversarial discrimination networks via the unified parameter set $\bm{\theta} = \{\bm{\theta}_f, \bm{\theta}_h, \bm{\theta}_c, \bm{\theta}_d\}$.
% To solve problem \eqref{equation-optimization_problem_original}, we design the overall architecture for AMTIDIN, where the hypotheses $\{h_\text{ID}, h_\text{MI}, h_\text{II}\}$ are parameterized by the network parameters $\bm{\theta} = \{\bm{\theta}_f, \bm{\theta}_h, \bm{\theta}_c, \bm{\theta}_d\}$.
As illustrated in Fig. \ref{fig-OverallArchitecture}, the architecture of AMTIDIN is structured into four primary modules: the parameter-shared feature extraction module with $\bm{\theta}_f$, the task-specific hypothesis module with $\bm{\theta}_h$, the classification module with $\bm{\theta}_c$, and the adversarial discrimination module with $\bm{\theta}_d$.
The functionality of each module is summarized as follows:
\begin{enumerate}
\item \textbf{Parameter-Shared Feature Extraction Module} $\big(\bm{\theta}_f\big)$: 
This module is the foundational feature encoder, extracting high-dimensional representations from the received interference signal.
Although the inputs for the three tasks are processed through independent parallel streams, they pass through the shared network layers parameterized by \( \bm{\theta}_f \).
This design enforces the learning of a universal feature space without merging the inputs.
\item \textbf{Task-Specific Hypothesis Module} $\big(\bm{\theta}_h^t, t\in \mathcal{I}_3\big)$: 
This module consists of three parallel sub-networks, each modeling a specific task.
During training, each hypothesis network takes all three feature representations as input and produces a task-specific output representation, which is then fed into the corresponding classification heads.
This design allows each hypothesis to learn from information transferred from other tasks while maintaining its own specialization.
\item \textbf{Classification Module} $\big(\bm{\theta}_c^{t,i}, t, i \in \mathcal{I}_3\big)$:
This module is the final decision-making component of the architecture, comprising a total of nine classification heads, with three heads attached to each hypothesis network.
During training, we use all nine heads to compute losses and update the task relation coefficients.
Note that, during inference, the final prediction for task $t$ is obtained exclusively from the classification head $\bm{\theta}_c^{t,t}$ associated with its own hypothesis network $\bm{\theta}_h^t$. 
\item \textbf{Adversarial Discrimination Module} $\big(\bm{\theta}_d^{t,i}$, $t, i \in \mathcal{I}_3$, $t<i\big)$: 
This module consists of three discriminators constrained by the 1-Lipschitz condition, each assigned to a unique pair of tasks.
These discriminators compete with the feature extraction module in a min-max game during training. 
Specifically, while the discriminators aim to maximize the estimated Wasserstein distance to distinguish task features, the feature extractor aims to minimize this distance. 
This adversarial competition enforces the learning of robust and invariant representations across tasks.
\end{enumerate}

\subsection{Algorithm Implementation for AMTIDIN}
Fig. \ref{fig:algorithm_implementation_flow} illustrates the overall algorithm implementation process of AMTIDIN.
To effectively minimize the empirical Wasserstein distance term, we use an adversarial learning strategy.
For each task pair $(t,i)$, we define an adversarial discrimination network $d^{t,i}$ with parameters $\bm{\theta}_d^{t,i}$ to estimate the Wasserstein distance between the feature distributions of tasks $t$ and $i$.
Following the adversarial learning paradigm \cite{Ganin2016Domainadversarial}, the minimization of the Wasserstein distance can be reformulated as a min-max optimization problem.
To this end, we reformulate the optimization problem \eqref{equation-optimization_problem_original} as
\begin{align}
\label{equation-optimization_problem_reformulated}
&\min_{\bm{\theta}_f, \bm{\theta}_h, \bm{\theta}_c, \{\bm{\alpha}_t\}_{t=1}^3} \!\!\! \max_{\bm{\theta}_d} \sum_{t=1}^3 \lambda_t \hat{\mathcal{L}}_{\bm{\alpha}_t}\left(\bm{\theta}_f,\bm{\theta}_h^{t},\bm{\theta}_c^{t}\right) \notag \\
& + \rho \! \sum_{t=1}^3 \! \lambda_t \!\sum_{i=1}^3 \alpha_{t,i} \hat{\mathcal{L}}_d^{t,i}\left(\bm{\theta}_f,\bm{\theta}_d^{t,i}\right) + C_1 \!\!\sum_{t=1}^3 \lambda_t \sqrt{\sum_{i=1}^3 \frac{\alpha_{t,i}^2}{\beta_i}},
\end{align}
where $\rho$ is a trade-off parameter;
$\hat{\mathcal{L}}_d^{t,i}\left(\bm{\theta}_f,\bm{\theta}_d^{t,i}\right)$ is the adversarial loss estimating the Wasserstein-1 distance between the feature distributions of tasks $t$ and $i$, following the Kantorovich-Rubinstein duality form in \eqref{equation-Wasserstein distance}, defined as
\begin{align}
\hat{\mathcal{L}}_d^{t,i}\left(\bm{\theta}_f,\bm{\theta}_d^{t,i}\right) 
&= \mathbb{E}_{\bm{X}^t \sim \hat{\mathcal{D}}^t} d^{t,i}\left(\bm{X}^t; \bm{\theta}_f, \bm{\theta}_d^{t,i}\right) \notag \\
&\quad - \mathbb{E}_{\bm{X}^i \sim \hat{\mathcal{D}}^i} d^{t,i}\left(\bm{X}^i; \bm{\theta}_f, \bm{\theta}_d^{t,i}\right),
\end{align}
where $d^{t,i}\left(\bm{X}; \bm{\theta}_f, \bm{\theta}_d^{t,i}\right)$ is the output of the adversarial discrimination network for task pair $(t,i)$.
To satisfy the 1-Lipschitz constraint required for Wasserstein distance estimation, we apply spectral normalization (SN) \cite{Miyato2018Spectral} to the weights of each adversarial discrimination network $d^{t,i}$, 
which offers computational efficiency and training stability compared to alternatives like weight clipping \cite{Arjovsky2017Wasserstein} or gradient penalty \cite{Gulrajani2017Improved}.

\begin{figure}[!t]
\centering
\includegraphics[width=0.99\linewidth]{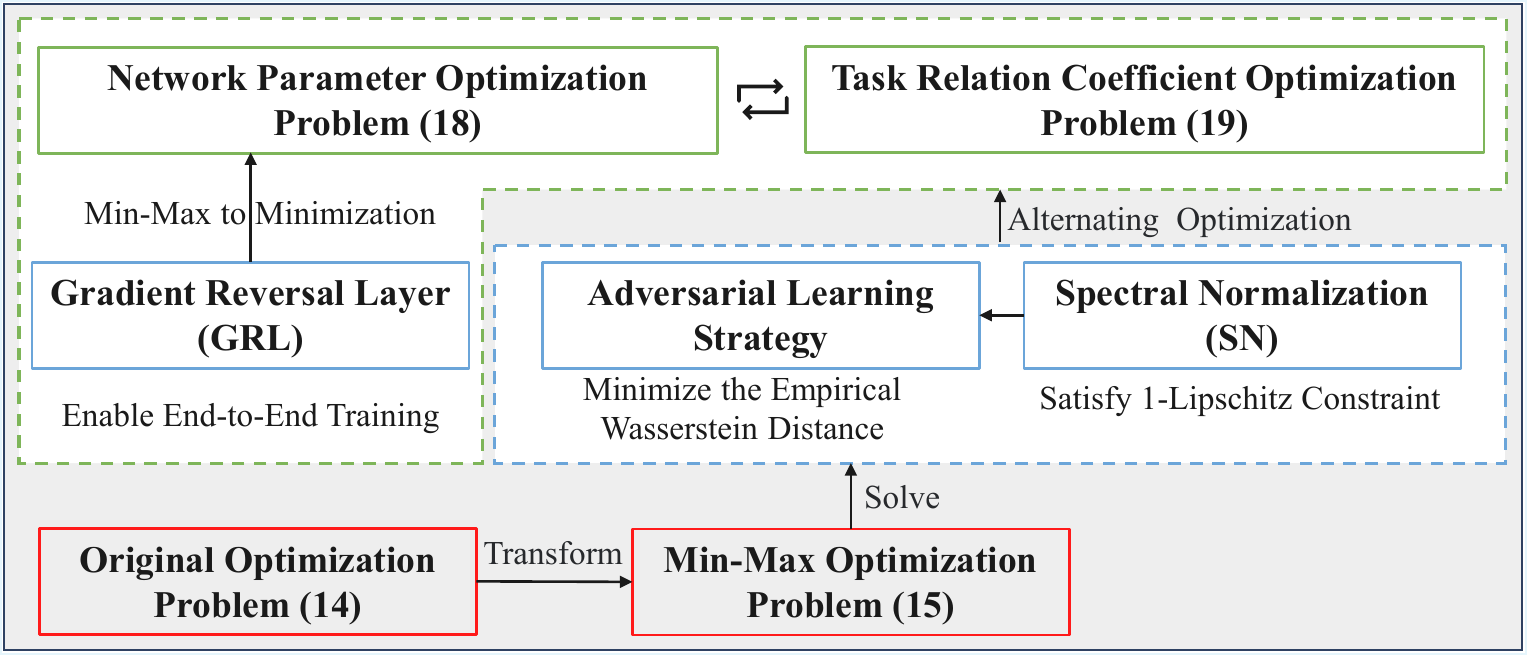}
\caption{
Algorithm implementation of AMTIDIN. We reformulate the original problem \eqref{equation-optimization_problem_original} into a min-max optimization problem \ref{equation-optimization_problem_reformulated}, 
and solve it by alternately updating the network parameters and the task relation coefficients.}
\label{fig:algorithm_implementation_flow}
\vspace{-1em}
\end{figure}

We next propose an iterative algorithm to solve problem \eqref{equation-optimization_problem_reformulated} by alternately optimizing the network parameters $\bm{\theta}$ and the task relation coefficients $\{\bm{\alpha}_t\}_{t=1}^3$.
\subsubsection{Optimizing Network Parameters with Fixed Task Relation Coefficients}
\label{section-network_parameter_optimization}
Given fixed task relation coefficients $\{\bm{\alpha}_t^{(\ell-1)}\}_{t=1}^3$, we update the network parameters $\bm{\theta}$ by solving the following min-max problem:
\begin{align}
\label{equation-optimization_problem_min_max}
&\{\bm{\theta}_f^{(\ell)}, \bm{\theta}_h^{(\ell)}, \bm{\theta}_c^{(\ell)}, \bm{\theta}_{d}^{(\ell)}\} \notag \\
&\quad= \arg\min_{\bm{\theta}_f, \bm{\theta}_h, \bm{\theta}_c} \max_{\bm{\theta}_{d}} \sum_{t=1}^3 \lambda_t \hat{\mathcal{L}}_{\bm{\alpha}_t^{(\ell-1)}}\left(\bm{\theta}_f,\bm{\theta}_h^{t},\bm{\theta}_c^{t}\right) \notag \\
&\quad\quad+ \rho \sum_{t=1}^3 \lambda_t \sum_{i=1}^3 \alpha_{t,i}^{(\ell-1)} \hat{\mathcal{L}}_{d}^{t,i}\left(\bm{\theta}_f,\bm{\theta}_{d}^{t,i}\right).
\end{align}
As illustrated in Fig. \ref{fig:GradientReversal}, to facilitate stable end-to-end training, we employ the gradient reversal layer (GRL) \cite{Ganin2016Domainadversarial}, which transforms the min-max problem \eqref{equation-optimization_problem_min_max} into a unified minimization problem.
Specifically, During forward propagation, the GRL acts as an identity transform, while during backpropagation, it reverses the gradient sign.
Thus, problem \eqref{equation-optimization_problem_min_max} can be reformulated as
\begin{align}
\label{equation-optimization_problem_minimization}
&\{\bm{\theta}_f^{(\ell)}, \bm{\theta}_h^{(\ell)}, \bm{\theta}_c^{(\ell)}, \bm{\theta}_{d}^{(\ell)}\} \notag \\
&\quad= \arg\min_{\bm{\theta}_f, \bm{\theta}_h, \bm{\theta}_c, \bm{\theta}_{d}} \sum_{t=1}^3 \lambda_t \hat{\mathcal{L}}_{\bm{\alpha}_t^{(\ell-1)}}\left(\bm{\theta}_f,\bm{\theta}_h^{t},\bm{\theta}_c^{t}\right) \notag \\
&\quad\quad+ \rho \sum_{t=1}^3 \lambda_t \sum_{i=1}^3 \alpha_{t,i}^{(\ell-1)} \hat{\mathcal{L}}_{d_\text{GRL}}^{t,i}\left(\bm{\theta}_f,\bm{\theta}_{d}^{t,i}\right),
\end{align}
where $\hat{\mathcal{L}}_{d_\text{GRL}}^{t,i}\left(\bm{\theta}_f,\bm{\theta}_{d}^{t,i}\right)$ is the adversarial loss computed through the GRL. 
We can solve problem \eqref{equation-optimization_problem_minimization} using mini-batch stochastic gradient descent with the Adam optimizer.

\begin{figure}[!t]
\centering
\includegraphics[width=0.99\linewidth]{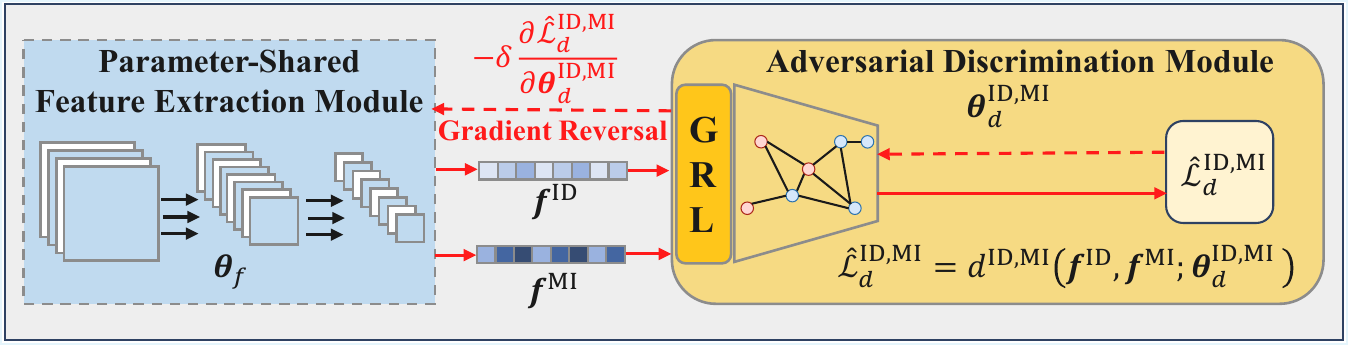}
\caption{Gradient Reversal Layer (GRL) for adversarial learning. 
The red dashed lines represent the gradient backpropagation flow for the adversarial loss.
}
\label{fig:GradientReversal}
\end{figure}

\subsubsection{Optimizing Task Relation Coefficients with Fixed Network Parameters}
Given fixed network parameters $\bm{\theta}^{(\ell)}$, we update the task relation coefficients $\{\bm{\alpha}_t\}_{t=1}^3$ by solving the following convex optimization problem:
\begin{align}
\label{eq:coefficient_optimization}
\{\bm{\alpha}_t^{(\ell)}\}_{t=1}^3 &= \arg\min_{\{\bm{\alpha}_t\}_{t=1}^3} \sum_{t=1}^3 \lambda_t \hat{\mathcal{L}}_{\bm{\alpha}_t}\left(\bm{\theta}_f^{(\ell)},\bm{\theta}_h^{t(\ell)},\bm{\theta}_c^{t(\ell)}\right) \notag \\
&\quad + \rho \sum_{t=1}^3 \lambda_t \sum_{i=1}^3 \alpha_{t,i} \hat{\mathcal{L}}_{d}^{t,i}\left(\bm{\theta}_f^{(\ell)},\bm{\theta}_{d}^{t,i(\ell)}\right) \notag \\
&\quad + C_1 \sum_{t=1}^3 \lambda_t \sqrt{\sum_{i=1}^3 \frac{\alpha_{t,i}^2}{\beta_i}}.
\end{align}
Problem \eqref{eq:coefficient_optimization} is solved using the \texttt{CVXPY} library \cite{Diamond2016CVXPY}.
Upon completing $E$ iterations of alternating optimization, we can obtain the optimized network parameters $\bm{\theta}^* =
\bm{\theta}^{(E)}$ and task relation coefficients $\{\bm{\alpha}_t^*\}_{t=1}^3 = \{\bm{\alpha}_t^{(E)}\}_{t=1}^3$,
and estimate the empirical Wasserstein-1 distance between tasks $t$ and $i$ using the trained adversarial discrimination network as
\begin{align}
\label{equation-Wasserstein distance_estimation}
W_1\left(\hat{\mathcal{D}}^i, \hat{\mathcal{D}}^t\right) &= \hat{\mathcal{L}}_d^{t,i}\left(\bm{\theta}_f^{(E)},\bm{\theta}_{d}^{t,i(E)}\right)  \notag \\
& = -\hat{\mathcal{L}}_{d_\text{GRL}}^{t,i} \left(\bm{\theta}_f^{(E)},\bm{\theta}_{d}^{t,i(E)}\right).
\end{align}
The estimation of the Wasserstein distance in \eqref{equation-Wasserstein distance_estimation} serves as a quantitative metric for task similarity, offering interpretable insights into the intrinsic correlations among the tasks. 
\footnote{Note that the Wasserstein distance is estimated by maximizing the adversarial loss $\hat{\mathcal{L}}_d^{t,i}$ in \eqref{equation-optimization_problem_min_max}. 
The GRL reformulates this as minimizing $\hat{\mathcal{L}}_{d_{\text{GRL}}}^{t,i}$ in \eqref{equation-optimization_problem_minimization} to facilitate standard backpropagation. 
Therefore, the estimated distance equals the negative of the empirical GRL-based adversarial loss.}
The training algorithm of AMTIDIN is summarized in \mbox{Algorithm \ref{algorithm1}.}

\begin{algorithm}[!t]
\caption{Training Algorithm of AMTIDIN}
\label{algorithm1}
\renewcommand{\algorithmicrequire}{\textbf{Input:}}
\renewcommand{\algorithmicensure}{\textbf{Output:}}
\begin{algorithmic}[1]
\REQUIRE ~\\ 
\hspace*{-2em} Training dataset $\mathcal{S} = \{(\bm{X}_m,y_m^{1},y_m^{2},y_m^{3})\}_{m=1}^{M}$; \\
\hspace*{-2em} Task weights $\{\lambda_t\}_{t=1}^3$; Trade-off parameter $\rho$; \\
\hspace*{-2em} Max epochs $E$; Batch size $B$.
\ENSURE ~\\ 
\hspace*{-2em} Network parameters $\bm{\theta} = \{\bm{\theta}_f, \bm{\theta}_h, \bm{\theta}_c, \bm{\theta}_d\}$; \\ 
\hspace*{-2em} Task relation coefficients $\{\bm{\alpha}_t\}_{t=1}^3 \subset \Delta_3$.
\STATE Initialize $\bm{\theta}$ randomly and $\bm{\alpha}_t$ as one-hot vectors;

\FOR{epoch from $1$ to $E$}
   \FOR{each mini-batch $\mathcal{B} \subset \mathcal{S}$ of size $B$}
       \STATE Update $\bm{\theta}$ via solving \eqref{equation-optimization_problem_minimization} using Adam optimizer;
   \ENDFOR % <--- 修改点：Batch 循环在这里结束
   
   % 系数更新移动到 Batch 循环之外，Epoch 循环之内
   \FOR{each task $t \in \mathcal{I}_3$}
       \STATE Update $\bm{\alpha}_t$ via solving \eqref{eq:coefficient_optimization} using \texttt{CVXPY}.
   \ENDFOR
\ENDFOR
\end{algorithmic}
\end{algorithm}

\begin{figure*}[h!]
\centering
\includegraphics[width=0.99\linewidth]{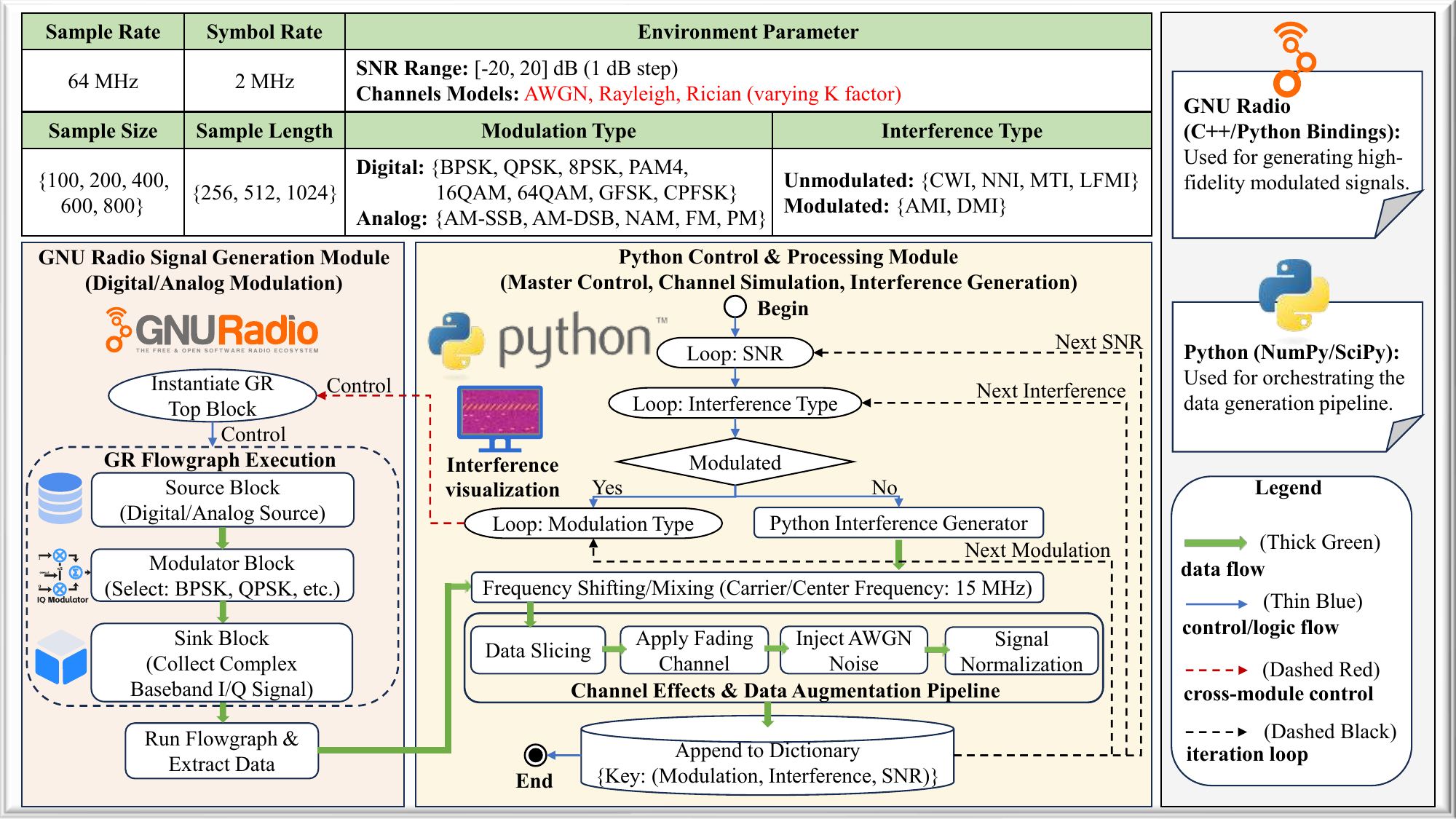}
\caption{Overview of the dataset generation framework. 
The framework illustrates the hybrid pipeline where GNU Radio handles modulated waveform synthesis while Python manages interference injection and channel simulation.}
\label{fig-overall_data_generation} 
\vspace{-1em}
\end{figure*}

\section{Experimental Results}
\label{sec:experimental_results}
In this section, we provide a comprehensive evaluation of the proposed AMTIDIN.

\subsection{Experimental Setup}
\subsubsection{Dataset Generation}
As illustrated in Fig. \ref{fig-overall_data_generation}, we constructed a synthetic multi-task dataset using a hybrid generation framework. 
The top panel of Fig. \ref{fig-overall_data_generation} details the specific parameter configurations.
In this system, Python scripts function as both the master controller and the generator for unmodulated interference, while the GNU Radio software-defined radio (SDR) platform is used for high-fidelity modulated waveform synthesis. 
For modulation identification, we adopted the methodology from the RadioML 2016.10a dataset \cite{Oshea2016Radio}, extending the original set to include noise amplitude modulation (NAM) and phase modulation (PM).
Regarding interference identification, because of the limited availability of open-source datasets, we synthesized six common interference types based on \cite{Wang2024Wireless}: continuous wave interference (CWI), narrowband noise interference (NNI), multi-tone interference (MTI), linear frequency modulation interference (LFMI), digital modulation interference (DMI), and analog modulation interference (AMI).
To rigorously evaluate model robustness, three channel models (AWGN, Rayleigh, and Rician) were employed in equal proportions for each SNR level.
For the interference detection task, a balanced dataset was constructed by using the previously generated interference mixture samples as the positive class, supplemented by an equal number of pure Gaussian noise samples as the negative class. 
Notably, the dataset supports diverse configurations of sample size, signal length, and SNR range to accommodate various experimental requirements. 
Furthermore, each sample is annotated with multiple labels (modulation, interference, SNR, and presence) to facilitate joint interference detection and identification tasks.

\subsubsection{Implementation Details}
The proposed AMTIDIN was implemented using Python 3.11 and PyTorch 2.5.1. All experiments were conducted on a high-performance workstation equipped with an Intel Core i9-13900HX CPU, 16 GB RAM, and an NVIDIA GeForce RTX 4060 GPU (8 GB VRAM). 
The detailed network architecture is summarized in Table \ref{tab:network_architecture}. 
During the training phase, the Adam optimizer was employed with a dynamic learning rate adjustment strategy via the \texttt{ReduceLROnPlateau} scheduler. 
Specifically, the initial learning rate was set to 0.001, with a decay factor of 0.1 and a patience of 8 epochs, down to a minimum of $1.0 \times 10^{-7}$. 
The batch size and maximum training epochs were configured as $B = 256$ and $E = 100$, respectively.
Regarding the optimization objective, task weights were assigned as $\lambda_{\text{ID}} = 0.05$, $\lambda_{\text{MI}} = 0.85$, and $\lambda_{\text{II}} = 0.15$ to prioritize modulation identification, while the trade-off parameter $\rho$ was fixed at 1. 
Finally, the dataset was partitioned into training, validation, and test sets with a split of 60\%, 20\%, and 20\%.
 
\begin{table}[!t]
\centering
\scriptsize
\caption{Detailed Architecture of the Proposed AMTIDIN}
\label{tab:network_architecture}
\renewcommand{\arraystretch}{0.95}
\setlength{\tabcolsep}{2pt}
\begin{tabularx}{0.95\linewidth}{
    >{\hsize=0.7\hsize\centering\arraybackslash}X 
    >{\hsize=0.75\hsize\centering\arraybackslash}X 
    >{\hsize=1.6\hsize\centering\arraybackslash}X 
    >{\hsize=0.95\hsize\centering\arraybackslash}X
}
\toprule
\textbf{Module} & \textbf{Layer} & \textbf{Parameters} & \textbf{Output} \\
\midrule
% 特征提取
\multirow{11}{*}{\makecell[c]{Parameter-\\Shared \\ Feature \\ Extraction \\ $\big(\bm{\theta}_f\big)$}} 
& Input     & $\{\bm{X^\text{ID}}, \bm{X^\text{MI}}, \bm{X^\text{II}}\}$ & $\left(3, 2, N\right)$ \\
& Conv1D-1  & $2 \!\to\! 32, k\!=\!3, s\!=\!1, p\!=\!1$ & $\left(3, 32, N\right)$ \\
& BatchNorm & $\text{features}\!=\!32$ & $\left(3, 32, N\right)$ \\
& GELU      & -- & $\left(32, N\right)$ \\
& Conv1D-2  & $32\! \to\! 64, k\!=\!5, s\!=\!1, p\!=\!2$ & $\left(3, 64, N\right)$ \\
& BatchNorm & $\text{features}\!=\!64$ & $\left(3, 64, N\right)$ \\
& GELU      & -- & $\left(3, 64, N\right)$ \\
& Conv1D-3  & $64 \!\to\! 128, k\!=\!7, s\!=\!1, p\!=\!3$ & $\left(3, 128, N\right)$ \\
& BatchNorm & $\text{features}\!=\!128$ & $\left(3, 128, N\right)$ \\
& GELU      & -- & $\left(3, 128, N\right)$ \\
& AvgPool   & $\text{out\_size}\!=\!1$ & $\left(3, 128\right)$ \\
\midrule
% 假设空间
\multirow{6}{*}{\makecell[c]{Hypothesis \\ $\big(\bm{\theta}_h^t\big)$}}
& Input     &  $\{\bm{f}^\text{ID}, \bm{f}^\text{MI}, \bm{f}^\text{II}\}$ & $\left(3, 128\right)$ \\
& Residual  & $3\! \times \!\left(128 \!\to \!256\right)$, $\text{drop}\!=\!0.1$ & $\left(3, 256\right)$ \\
& Linear-1  & $3 \!\times\! \left(256 \!\to\! 128\right)$ & $\left(3, 128\right)$ \\
& GELU      & -- & $\left(3, 128\right)$ \\
& Linear-2  & $3 \!\times \!\left(128 \!\to\! 64\right)$ & $\left(3, 64\right)$ \\
& GELU      & -- & $\left(3, 64\right)$ \\
\midrule
% 分类层
\multirow{3}{*}{\makecell[c]{Classification \\ $\big(\bm{\theta}_c^{t,i}\big)$}} 
& Input     & Parallel hypothesis out & $\left(3, 64\right)$ \\
& Linear    & $3 \!\times\! \left(3 \!\times\! 64 \!\to \!\{\hat{y}^{\text{ID}}, \hat{y}^{\text{MI}}, \hat{y}^{\text{II}}\}\right)$ & $3 \!\times\!\{\hat{y}^{\text{ID}}, \hat{y}^{\text{MI}}, \hat{y}^{\text{II}}\}$ \\
& Softmax   & -- & $3 \!\times\!\{\hat{y}^{\text{ID}}, \hat{y}^{\text{MI}}, \hat{y}^{\text{II}}\}$ \\
\midrule
% 对抗判别
\multirow{6}{*}{\makecell[c]{Adversarial \\ Discrimination \\ $\big(\bm{\theta}_d^{t,i}$, $t<i\big)$}}
& Input     & \{$\bm{f}^t, \bm{f}^i$\} & $\left(2, 128\right)$ \\   
& GRL       & Gradient Reversal & $\left(2, 128\right)$ \\
& SN Linear-1 & $2\! \times \!\left(128 \!\to\! 64\right)$ & $\left(2, 64\right)$ \\
& ELU       & -- & $\left(2, 64\right)$ \\
& SN Linear-2 & $2\! \times\! \left(64 \!\to\! 1\right)$ & $\left(2, 1\right)$ \\
& Sigmoid   & -- & $\left(2, 1\right)$ \\
\bottomrule
\end{tabularx}
\vspace{-1em}
\end{table}

\subsubsection{Evaluation Metric}
To quantitatively evaluate the proposed method, we adopt classification accuracy as the primary metric. For a specific task $t$, $\text{Accuracy}^t$ measures the percentage of correctly classified samples relative to the total test set size $M_{\text{test}}$, formulated as
\begin{equation}
    \text{Accuracy}^t = \frac{1}{M_{\text{test}}} \sum_{j=1}^{M_{\text{test}}} \mathbb{I}\left(\hat{y}_j^t = y_j^t\right) \times 100\%,
    \label{eq:accuracy}
\end{equation}
where $\hat{y}_j^t$ and $y_j^t$ denote the predicted and ground-truth labels for the $j$-th sample, respectively. 
All reported results represent the average of five independent trials to ensure statistical reliability.
% \subsubsection{Baseline Models}

\begin{figure}[!t]
\centering
\subfloat[SNR = -5 dB]{\includegraphics[width=0.31\linewidth]{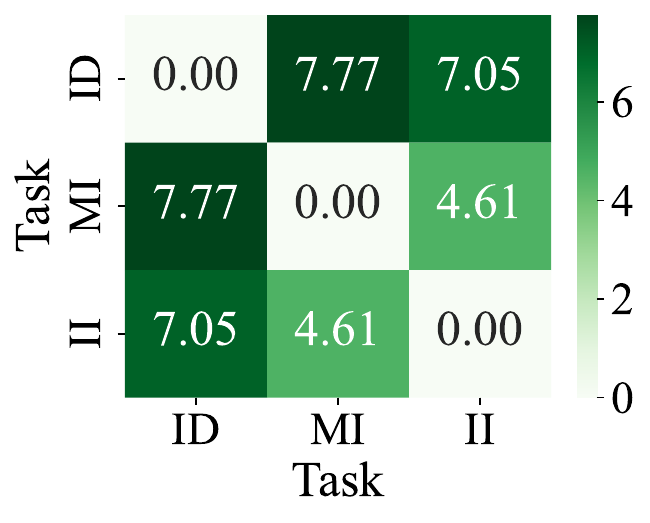}%
\label{fig-wasserstein_distance_matrix_final_logits_-5}}
\hfil
\subfloat[SNR = 5 dB]{\includegraphics[width=0.33\linewidth]{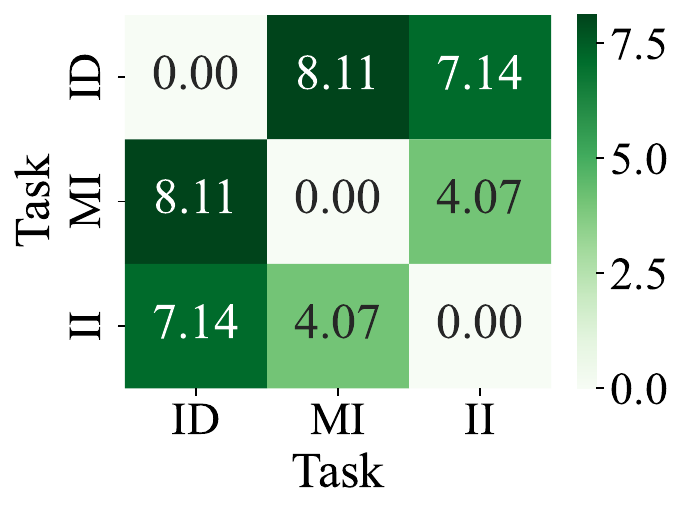}%
\label{fig-wasserstein_distance_matrix_final_logits_5}}
\hfil
\subfloat[SNR = 15 dB]{\includegraphics[width=0.31\linewidth]{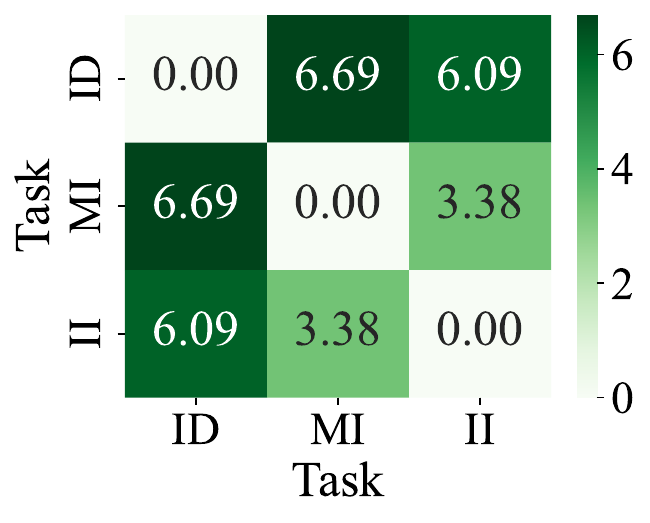}%
\label{fig-wasserstein_distance_matrix_final_logits_15}}
\caption{Wasserstein distance matrices before sigmoid transformation. (a) SNR = -5 dB. (b) SNR = 5 dB. (c) SNR = 15 dB.}
\label{wasserstein_distance_matrix_final_logits}
\vspace{-2em}
\end{figure}

\begin{figure}[!t]
\centering
\subfloat[SNR = -5 dB]{\includegraphics[width=0.33\linewidth]{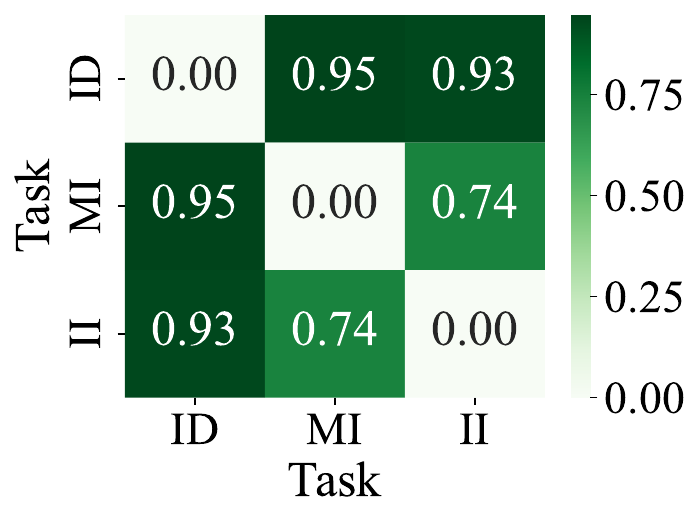}%
\label{fig-wasserstein_distance_matrix_final_-5}}
\hfil
\subfloat[SNR = 5 dB]{\includegraphics[width=0.33\linewidth]{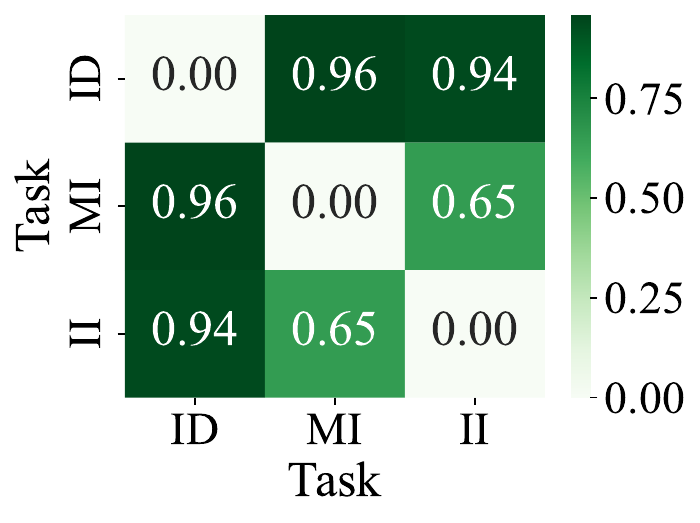}%
\label{fig-wasserstein_distance_matrix_final_5}}
\hfil
\subfloat[SNR = 15 dB]{\includegraphics[width=0.33\linewidth]{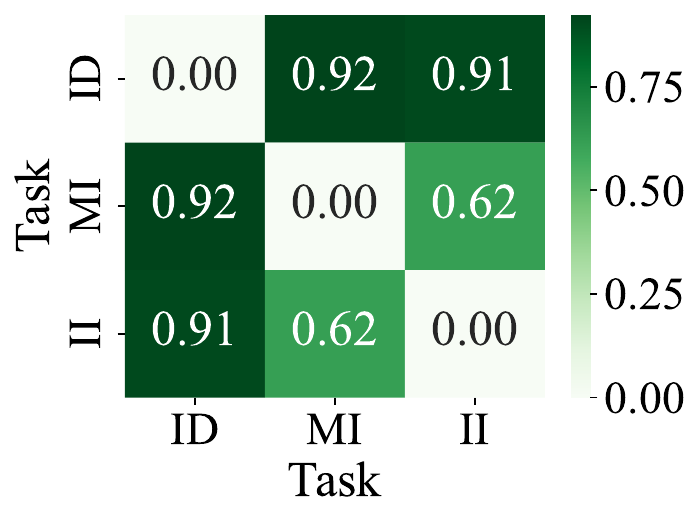}%
\label{fig-wasserstein_distance_matrix_final_15}}
\caption{Wasserstein distance matrices after sigmoid transformation. (a) SNR = -5 dB. (b) SNR = 5 dB. (c) SNR = 15 dB.}
\label{wasserstein_distance_matrix_final}
\vspace{-2em}
\end{figure}

\begin{figure}[!t]
\centering
\subfloat[SNR = -5 dB]{\includegraphics[width=0.33\linewidth]{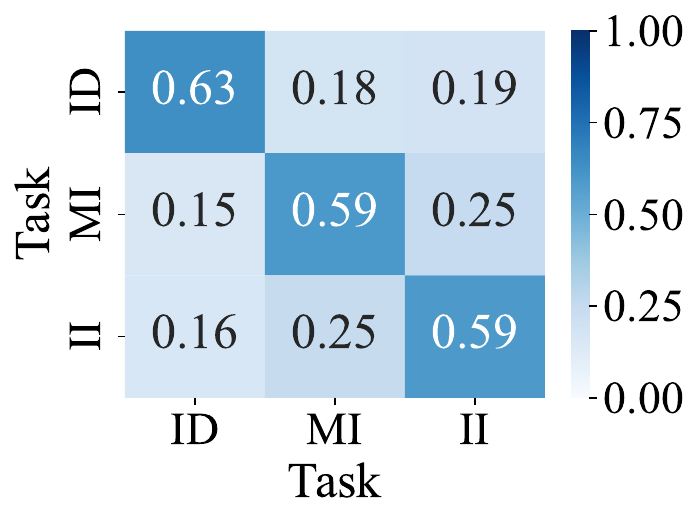}%
\label{fig-task_relation_coefficient_matrix_final_-5}}
\hfil
\subfloat[SNR = 5 dB]{\includegraphics[width=0.33\linewidth]{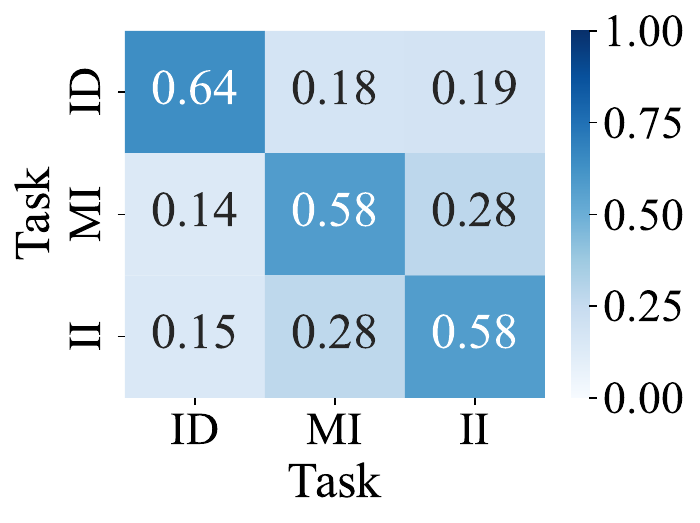}%
\label{fig-task_relation_coefficient_matrix_final_5}}
\hfil
\subfloat[SNR = 15 dB]{\includegraphics[width=0.33\linewidth]{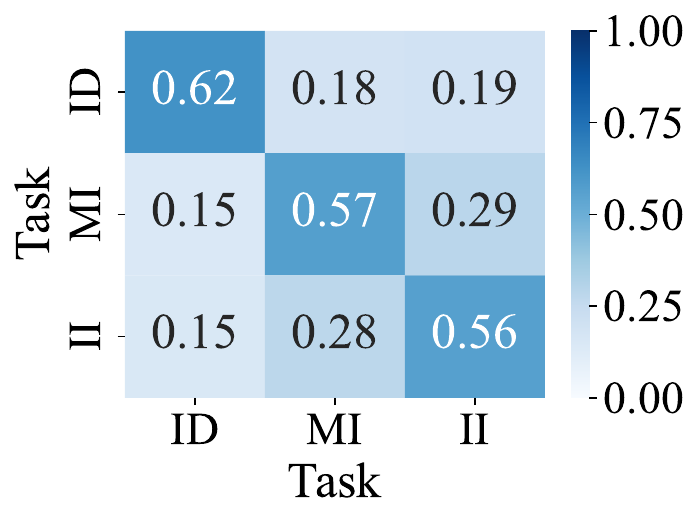}%
\label{fig-task_relation_coefficient_matrix_final_15}}
\caption{Task relation coefficient matrices. (a) SNR = -5 dB. (b) SNR = 5 dB. (c) SNR = 15 dB.}
\label{task_relation_coefficient_matrix_final}
\vspace{-1em}
\end{figure}

\subsection{Task Similarity Analysis}
To validate the proposed similarity metrics, we employed a restricted training regime by limiting the sample size to 100 per class and the signal length to 256. 
This constraint intentionally compels the model to exploit intrinsic task correlations for performance rather than relying solely on data abundance. 
We trained AMTIDIN and evaluated the empirical Wasserstein distance and task relation coefficients under low (-5 dB), medium (5 dB), and high (15 dB) SNR conditions.
The resulting Wasserstein distance matrices before and after sigmoid transformation, along with the task relation coefficient matrices, are illustrated in Figs.~\ref{wasserstein_distance_matrix_final_logits}, \ref{wasserstein_distance_matrix_final}, and \ref{task_relation_coefficient_matrix_final}, respectively.

As summarized in Table \ref{tab:task_similarity}, the results reveal a significant similarity between modulation identification and interference identification. 
These two tasks exhibit the lowest average Wasserstein distance (4.02) and the highest task relation coefficients (0.27/0.27). 
This indicates that both tasks share a substantial overlap in the fine-grained feature space, facilitating mutual enhancement during joint learning.
By contrast, interference detection demonstrates a marked distributional divergence from the other tasks, evidenced by significantly larger distances (averaging 7.52 with modulation identification and 6.76 with interference identification) and lower relation coefficients (averaging 0.18/0.15 with modulation identification and 0.19/0.15 with interference identification). 
This disparity aligns with the physical intuition that interference detection is fundamentally a coarse-grained binary task focusing on signal presence, whereas modulation identification and interference identification require detailed pattern discrimination. 
These observations confirm that our method effectively quantifies the underlying relationships among interference detection and identification tasks, clarifying why feature transfer is more robust between modulation identification and interference identification than with interference detection.

\begin{table}[!t]
\centering
\caption{Quantitative Results of Task Similarity}
\label{tab:task_similarity}
\renewcommand{\arraystretch}{1.0}
\setlength{\tabcolsep}{2pt}
\begin{tabularx}{0.95\linewidth}{
    >{\hsize=0.8\hsize\centering\arraybackslash}X 
    >{\hsize=0.6\hsize\centering\arraybackslash}X 
    >{\hsize=1.0\hsize\centering\arraybackslash}X 
    >{\hsize=1.1\hsize\centering\arraybackslash}X
    >{\hsize=1.5\hsize\centering\arraybackslash}X
}
\toprule
\textbf{Task Pair} & \textbf{SNR} & \textbf{W-Dist} & \textbf{W-Dist (Sig)} & \textbf{Relation Coeff} \\
\midrule

% 第一组: ID & MI
\multirow{4}{*}{\textbf{ID \& MI}} 
 & -5 dB & 7.77 & 0.95 & 0.18 / 0.15 \\
 & 5 dB  & 8.11 & 0.96 & 0.18 / 0.14 \\
 & 15 dB & 6.69 & 0.92 & 0.18 / 0.15 \\
 & \textbf{Avg} & \textbf{7.52} & \textbf{0.94} & \textbf{0.18 / 0.15} \\
\midrule

% 第二组: ID & II
\multirow{4}{*}{\textbf{ID \& II}} 
 & -5 dB & 7.05 & 0.93 & 0.19 / 0.16 \\
 & 5 dB  & 7.14 & 0.94 & 0.19 / 0.15 \\
 & 15 dB & 6.09 & 0.91 & 0.19 / 0.15 \\
 & \textbf{Avg} & \textbf{6.76} & \textbf{0.93} & \textbf{0.19 / 0.15} \\
\midrule

% 第三组: MI & II
\multirow{4}{*}{\textbf{MI \& II}} 
 & -5 dB & 4.61 & 0.74 & 0.25 / 0.25 \\
 & 5 dB  & 4.07 & 0.65 & 0.28 / 0.28 \\
 & 15 dB & 3.38 & 0.62 & 0.29 / 0.28 \\
 & \textbf{Avg} & \textbf{4.02} & \textbf{0.67} & \textbf{0.27 / 0.27} \\
\bottomrule
\end{tabularx}
\vspace{-1em}
\end{table}

\begin{figure*}[!ht]
\centering
\subfloat[Accuracy of interference detection]{\includegraphics[width=0.32\linewidth]{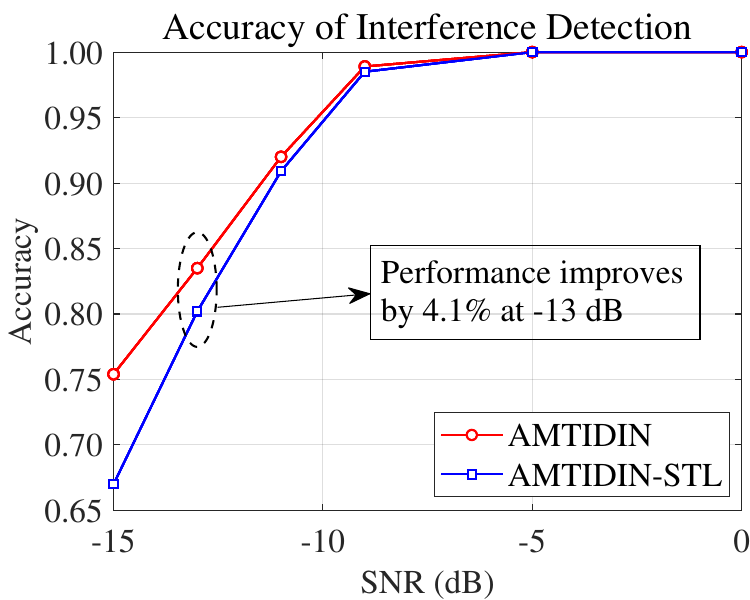}%
\label{fig-M_S_SNR_ID}}
\hfil
\subfloat[Accuracy of modulation identification]{\includegraphics[width=0.32\linewidth]{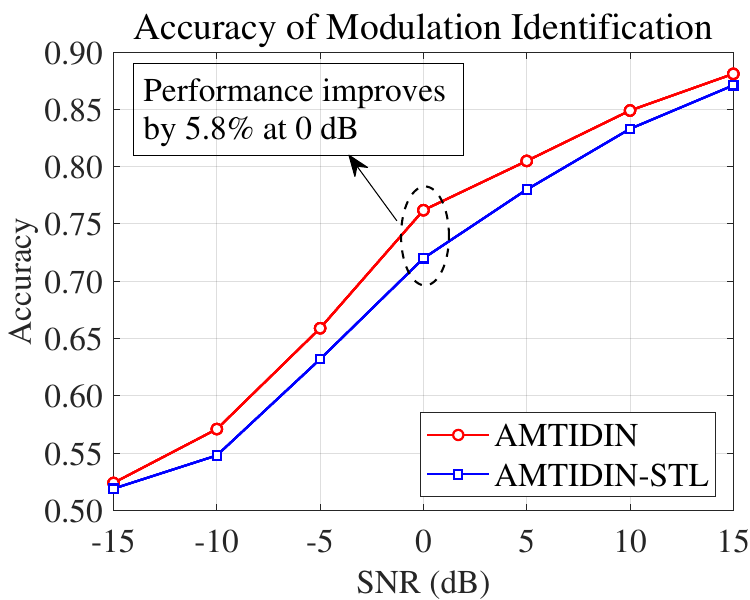}%
\label{fig-M_S_SNR_MI}}
\hfil
\subfloat[Accuracy of interference identification]{\includegraphics[width=0.32\linewidth]{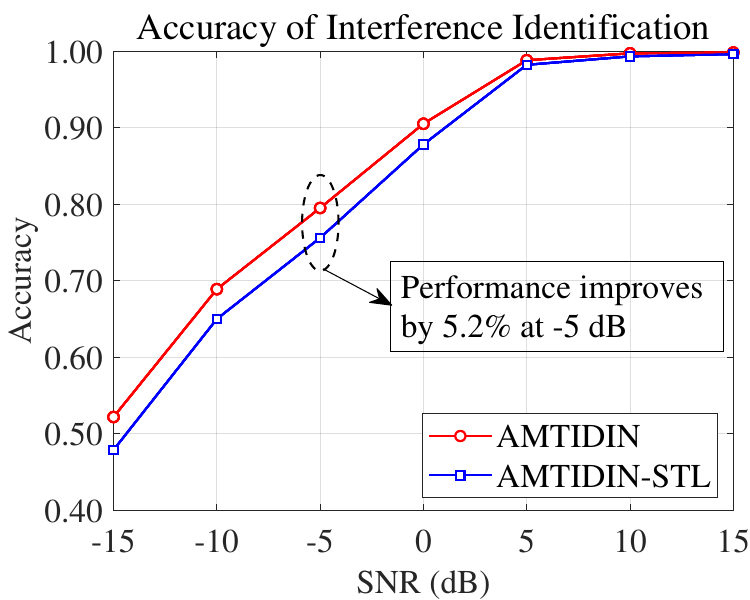}%
\label{fig-M_S_SNR_II}}
\caption{Classification accuracy comparison between AMTIDIN and AMTIDIN-STL under varying SNR levels.}
% (a) The classification accuracy of ID. (b) The classification accuracy of MI. (c) The classification accuracy of ID.}
\label{fig-M_S_SNR}
\vspace{-1em}
\end{figure*}

\begin{figure*}[!ht]
\centering
\subfloat[Accuracy of interference detection \\ (SNR = -13 dB)]{\includegraphics[width=0.32\linewidth]{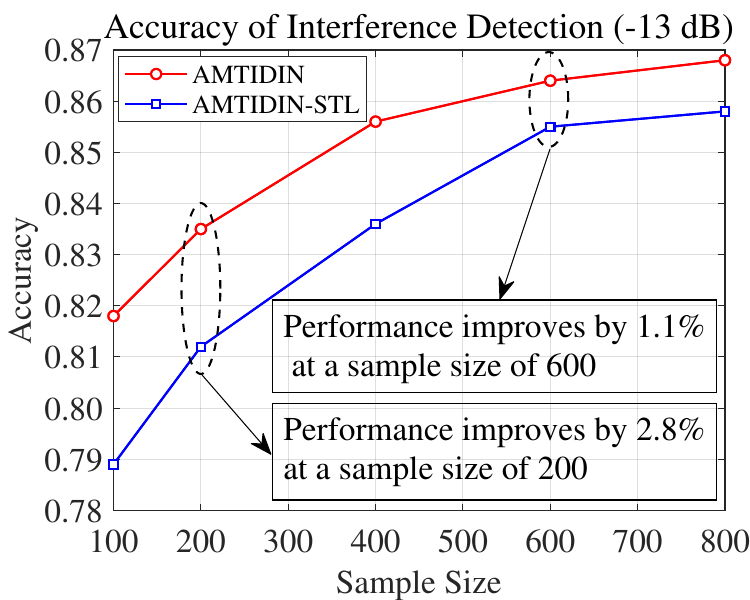}
\label{fig-M_S_SamSize_ID}}
\hfil
\subfloat[Accuracy of modulation identification \\ (SNR = 0 dB)]{\includegraphics[width=0.32\linewidth]{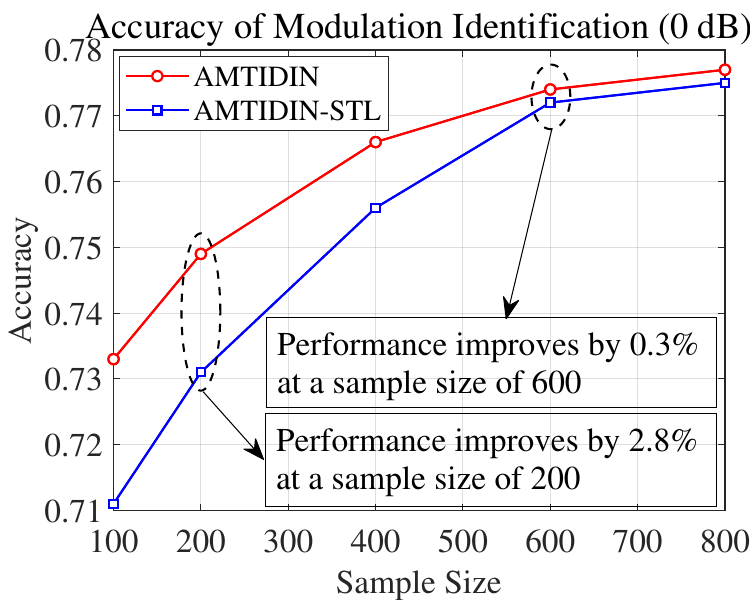}
\label{fig-M_S_SamSize_MI}}
\hfil
\subfloat[Accuracy of interference identification \\ (SNR = -5 dB)]{\includegraphics[width=0.32\linewidth]{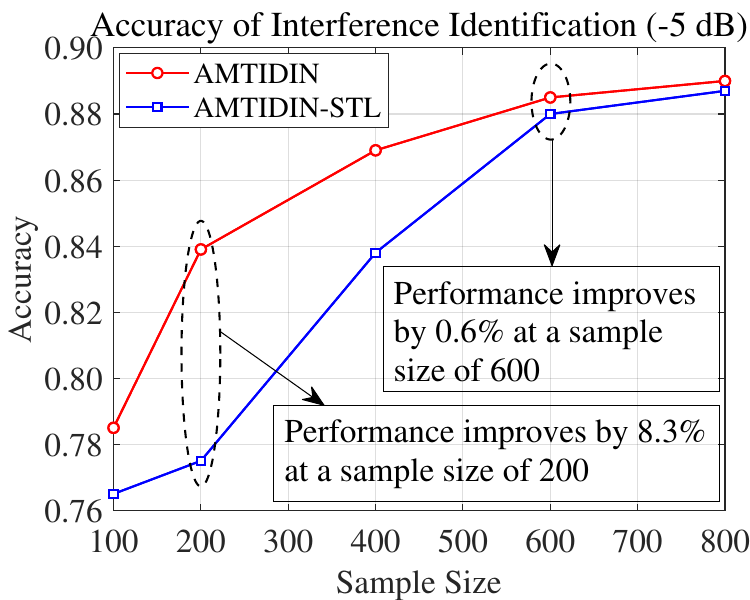}
\label{fig-M_S_SamSize_II}}
\caption{Classification accuracy comparison between AMTIDIN and AMTIDIN-STL under varying sample sizes.}
% (a) The classification accuracy of ID. (b) The classification accuracy of MI. (c) The classification accuracy of II.}
\label{fig-M_S_SamSize}
\vspace{-1em}
\end{figure*}

\begin{figure*}[!ht]
\centering
\subfloat[Accuracy of interference detection \\ (SNR = -13 dB)]{\includegraphics[width=0.32\linewidth]{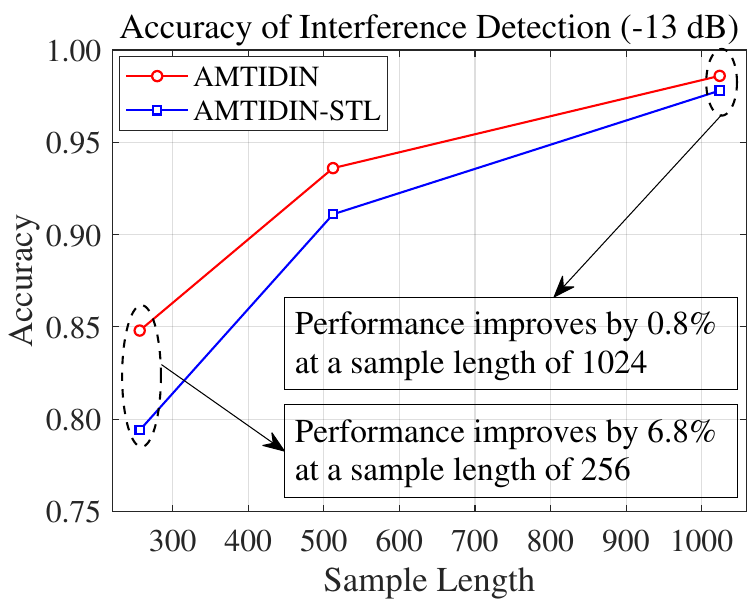}
\label{fig-M_S_SamLen_ID}}
\hfil
\subfloat[Accuracy of modulation identification \\ (SNR = 0 dB)]{\includegraphics[width=0.32\linewidth]{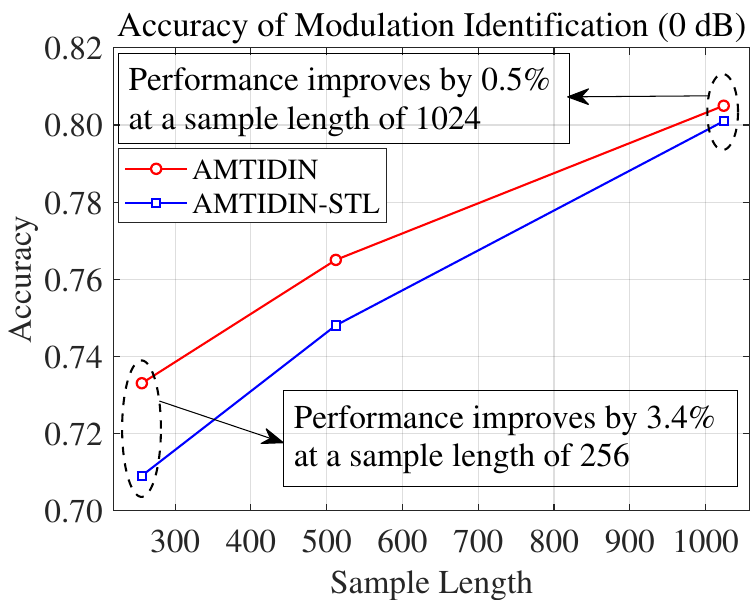}
\label{fig-M_S_SamLen_MI}}
\hfil
\subfloat[Accuracy of interference identification \\ (SNR = -5 dB)]{\includegraphics[width=0.32\linewidth]{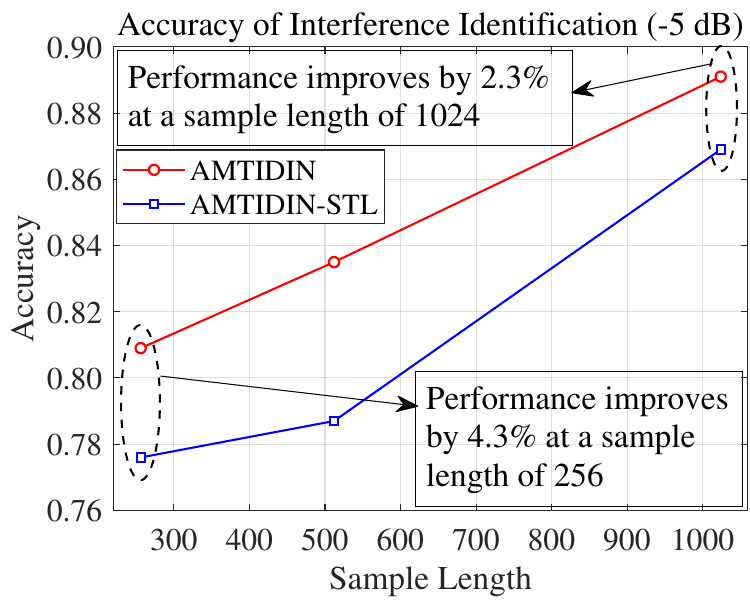}
\label{fig-M_S_SamLen_II}}
\caption{Classification accuracy comparison between AMTIDIN and AMTIDIN-STL under varying sample lengths.} 
% (a) The classification accuracy of ID. (b) The classification accuracy of MI. (c) The classification accuracy of II.}
\label{fig-M_S_SamLen}
\vspace{-1em}
\end{figure*}

\subsection{Comparison with Task-Specific STL Baseline}
\label{subsec:comparison_stl}
To rigorously validate that the performance gains stem from our proposed adversarial multi-task framework rather than architectural complexity, we compare AMTIDIN against its task-specific STL baseline (AMTIDIN-STL).
This baseline inherits the identical network structure from AMTIDIN, including the feature extraction, hypothesis, and classification modules, but is trained independently for each task.
By maintaining consistent hyperparameters and model capacity, we ensure a fair comparison that isolates the benefits derived from task-sharing and adversarial training mechanisms.

\subsubsection{Robustness against Noise} % Varying SNR Levels
We first analyze performance across varying SNR levels. 
To simulate a resource-constrained scenario, we fix the signal length at 256 and restrict the training set to 100 samples per class. 
To explicitly quantify performance gains, we focus on the transition regions of the accuracy curves (i.e., the rising slope prior to saturation), where model robustness is most discernible.
Therefore, we select the center points of these transition regions as representative scenarios for analysis.
Based on the results shown in Fig. \ref{fig-M_S_SNR}, we identify -13 dB, 0 dB, and -5 dB as the critical SNR points for interference detection, modulation identification, and interference identification, respectively.
At these specific SNR levels, AMTIDIN demonstrates a marked advantage over its task-specific STL baseline. 
Specifically, for interference detection depicted in Fig. \ref{fig-M_S_SNR}\subref{fig-M_S_SNR_ID}, the proposed method achieves a 4.1\% gain over the baseline at -13 dB, subsequently reaching 92.0\% accuracy at -11 dB. 
For modulation identification shown in Fig. \ref{fig-M_S_SNR}\subref{fig-M_S_SNR_MI}, it outperforms the baseline by 5.8\% at 0 dB. 
Similarly, for interference identification illustrated in Fig. \ref{fig-M_S_SNR}\subref{fig-M_S_SNR_II}, it surpasses the baseline by 5.2\% at -5 dB. 
These results highlight that the MTL framework effectively aggregates shared knowledge to enhance robustness in low-SNR conditions. 
Consequently, these three levels, representing the most discriminative SNR environments, are fixed for subsequent evaluations of sample size and signal length.
% \enlargethispage{\baselineskip}  对齐使用

\begin{table*}[!t]
\centering
\caption{Comparison of MTL Models}
\label{table:baseline_models}
% \normalsize
% 1. 字体调整 (根据需要选 \small 或 \scriptsize)
\renewcommand{\arraystretch}{1.2} % 行高
\setlength{\tabcolsep}{5pt}       % 列间距

% 2. 【关键】全局所有列垂直居中设置 (m = middle)
\renewcommand{\tabularxcolumn}[1]{m{#1}}

\begin{tabularx}{0.95\linewidth}{
    % 第1列：水平居中 (\centering) + 垂直居中 (m)
    >{\hsize=0.6\hsize\centering\arraybackslash}X 
    % 第2-5列：默认两端对齐 (justify) + 垂直居中 (m)
    % 如果你想让长文本也水平居中，可以在下面都加上 \centering
    >{\hsize=0.9\hsize\arraybackslash}X 
    >{\hsize=1.3\hsize\arraybackslash}X 
    >{\hsize=1.0\hsize\arraybackslash}X 
    >{\hsize=1.2\hsize\arraybackslash}X 
}
\toprule
% 表头：使用 \multicolumn{1}{c}{...} 强制居中表头
\textbf{Feature} & 
\multicolumn{1}{c}{\textbf{MTL\_Vanilla}} & 
\multicolumn{1}{c}{\textbf{MTL\_MMoE}} & 
\multicolumn{1}{c}{\textbf{MTL\_NonAdv}} & 
\multicolumn{1}{c}{\textbf{AMTIDIN (Ours)}} \tabularnewline
\midrule

% --- 第一行 ---
% 第一列：去掉 makecell，直接写文字，用 \par 换行
Adversarial \par Training & 
\multicolumn{1}{c}{No} & 
\multicolumn{1}{c}{No} & 
\multicolumn{1}{c}{No} & 
\multicolumn{1}{c}{Yes} \tabularnewline
\hline

% --- 第二行 ---
Task Relation \par Coefficients $\bm{\alpha}$ & 
\multicolumn{1}{c}{No} & 
\multicolumn{1}{c}{No} & 
\multicolumn{1}{c}{Yes} & 
\multicolumn{1}{c}{Yes} \tabularnewline
\hline

% --- 第三行 ---
Mechanism \par \& \par Objective & 
% 长文本：由于列定义是 X (m)，它们会自动垂直居中。
% 保持两端对齐（不加 \centering）是排版长英文的最佳实践。
The vanilla MTL model used in most prior works. It optimizes the standard weighted empirical loss $\sum_{t=1}^{3} \lambda_t \mathcal{L}_t(h_t)$. &
The multi-gate mixture-of-experts (MMoE) model proposed in \cite{Ma2018Modeling}. It uses a gating network to dynamically select experts for each task and optimizes the standard weighted empirical loss. &
The MTL model has the same architecture as our AMTIDIN but lacks adversarial training \cite{Murugesan2016Adaptive}. It optimizes $\sum_{t=1}^{3} \lambda_t \mathcal{L}_{\bm{\alpha}_t}(h_t)$. &
The MTL model proposed in this paper. It optimizes the weighted empirical loss and the adversarial loss simultaneously. \tabularnewline
\bottomrule
\end{tabularx}
\vspace{-1em}
\end{table*}

\begin{figure*}[!t]
\centering
\subfloat[Accuracy of interference detection]{\includegraphics[width=0.32\linewidth]{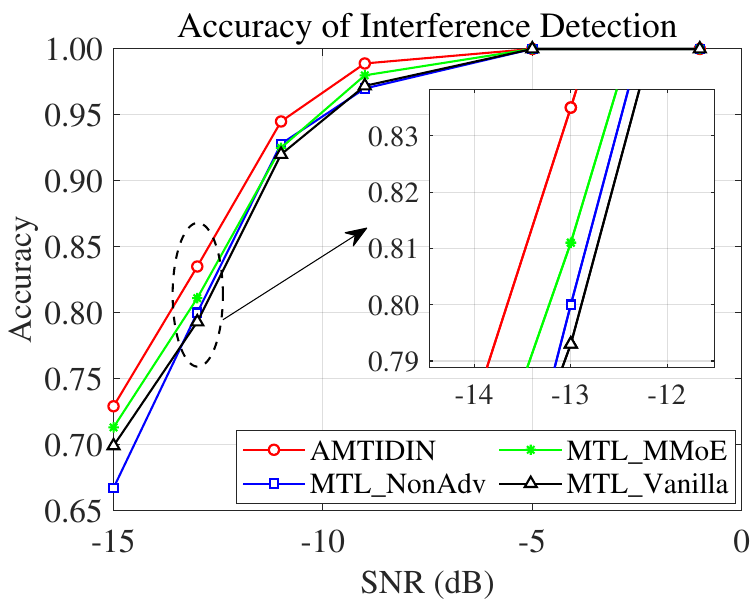}%
\label{fig-M_M_SNR_ID}}
\hfil
\subfloat[Accuracy of modulation identification]{\includegraphics[width=0.32\linewidth]{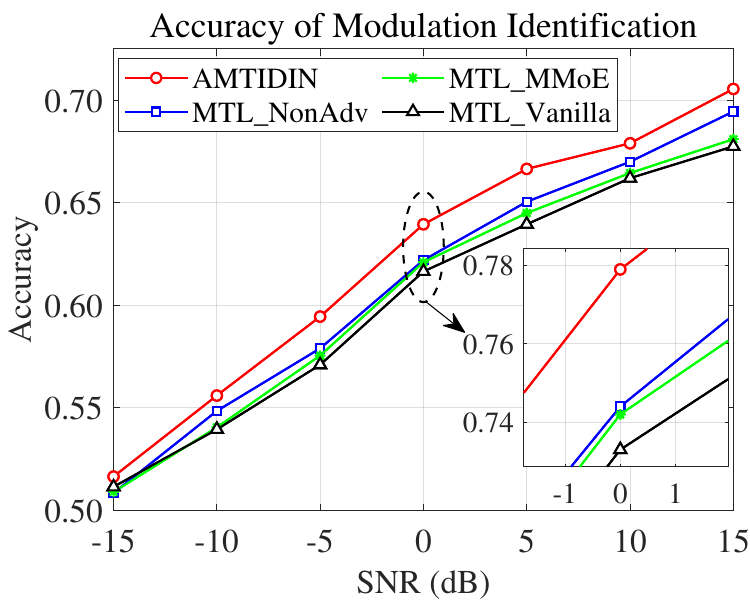}%
\label{fig-M_M_SNR_MI}}
\hfil
\subfloat[Accuracy of interference identification]{\includegraphics[width=0.32\linewidth]{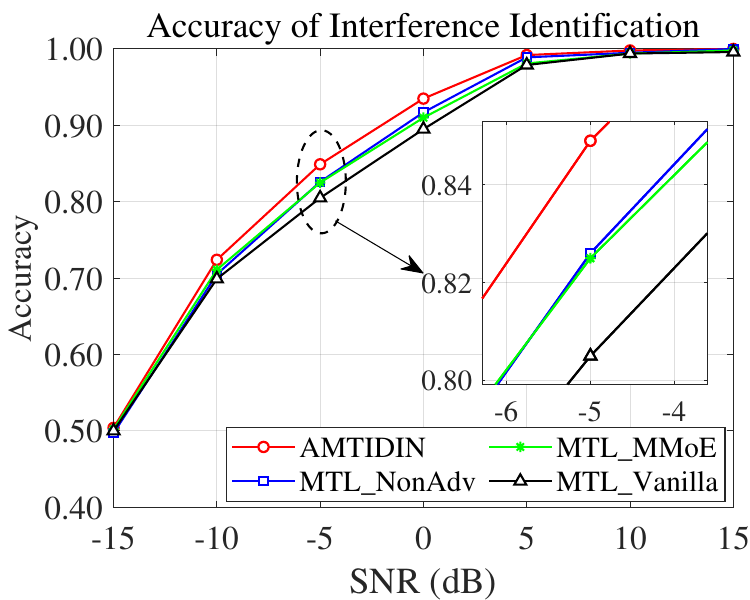}%
\label{fig-M_M_SNR_II}}
\caption{Classification accuracy comparison between AMTIDIN and other MTL models under varying SNR levels.}
%  (a) The classification accuracy of interference detection. (b) The classification accuracy of modulation identification. (c) The classification accuracy of inteference identification.}
\label{fig-M_M_SNR}
\vspace{-1em}
\end{figure*}

\subsubsection{Data Efficiency} % Varying Sample Sizes
To assess model robustness under data scarcity, we fix the SNR levels at the aforementioned points (-13 dB, 0 dB, -5 dB) and vary the training sample size from 100 to 800. 
As illustrated in Fig. \ref{fig-M_S_SamSize}, while both models improve with more data, AMTIDIN significantly surpasses the STL counterpart when data is limited. 
At a sample size of 200, AMTIDIN achieves performance improvements of 2.8\%, 2.8\%, and 8.3\% for interference detection, modulation identification, and interference identification, respectively.
As data availability increases to 600, this gap naturally narrows. 
These results confirm that, even without increasing model depth or width, the parameter-sharing mechanism in AMTIDIN acts as an effective regularizer, mitigating overfitting and enabling superior generalization in data-limited scenarios compared to the isolated training of STL.

\subsubsection{Sensitivity to Signal Duration} % Varying Sample Lengths
Finally, we evaluate the impact of signal duration (ranging from 256 to 1024 points) under the same fixed SNR conditions. 
Fig. \ref{fig-M_S_SamLen} demonstrates that AMTIDIN provides substantial benefits for short signals, achieving gains of 6.8\%, 3.4\%, and 4.3\%, respectively, at a length of 256. 
While increasing signal length to 1024 improve STL performance by providing richer temporal features, the multi-task advantage remains evident. 
This suggests that the performance bottleneck in STL lies not only in the network capacity but also in the lack of cross-task feature reinforcement. 
AMTIDIN effectively compensates for the lack of discriminative features in short signals by leveraging shared information, maintaining high accuracy where STL struggles.

\subsection{Comparison with MTL Baselines}
To strictly validate the effectiveness of the proposed mechanisms, we compare AMTIDIN with three representative multi-task baselines: MTL\_Vanilla (a basic hard-parameter sharing model), MTL\_MMoE (a state-of-the-art model using gating networks in \cite{Ma2018Modeling}), and MTL\_NonAdv (an ablation variant retaining task relation coefficients $\bm{\alpha}$ but excluding the adversarial training module, similar to \cite{Murugesan2016Adaptive}).
Table \ref{table:baseline_models} details the architectural differences.
Critically, these comparisons are conducted under the same challenging constraints defined in \cref{subsec:comparison_stl} (signal length of 256, 100 samples per class) to test model generalization under resource-limited conditions.

The comparative results for interference detection, modulation identification, and interference identification are illustrated in Fig. \ref{fig-M_M_SNR}. 
AMTIDIN consistently achieves the highest performance across all tasks and SNR ranges. 
A detailed breakdown of the performance gains reveals the specific contributions of our theoretical components:
\begin{itemize}
    \item \textbf{Benefit of Learnable Task Relation Coefficients ($\bm{\alpha}$):} 
    By comparing MTL\_NonAdv (blue curve) with MTL\_Vanilla (black curve), we observe distinct performance gains, particularly for modulation identification as shown in Fig. \ref{fig-M_M_SNR}\subref{fig-M_M_SNR_MI}. 
    This demonstrates that the proposed learnable coefficients $\bm{\alpha}$ allow the model to explicitly quantify task similarity and exploit intrinsic correlations among tasks.  
    Unlike the implicit feature sharing in MTL\_Vanilla, the incorporation of task relation coefficients in MTL\_NonAdv enables adaptive weighting of auxiliary tasks, ensuring that the learning process is driven by constructive task relationships rather than blind parameter sharing.
    \item \textbf{Impact of Adversarial Feature Alignment:} 
    The superiority of AMTIDIN (red curve) over its ablation variant MTL\_NonAdv isolates the contribution of the adversarial module. 
    As highlighted in the zoomed-in region of Fig. \ref{fig-M_M_SNR}\subref{fig-M_M_SNR_ID} (at -13 dB), AMTIDIN establishes a clear accuracy margin. 
    This confirms that explicitly minimizing the Wasserstein distance between task distributions compels the feature extraction module to learn task-invariant representations. 
    These aligned features are inherently more robust to noise, significantly improving generalization in low-SNR regimes where standard MTL methods falter.
    \item \textbf{Advantage over State-of-the-Art (MMoE):}
    Although MTL\_MMoE (green curve) employs a sophisticated gating mechanism to model task relationships, it underperforms AMTIDIN in this specific domain. 
    This suggests that in scenarios with limited training data and short signal lengths, complex gating networks may struggle to converge or capture the fine-grained physical correlations of signals. 
    In contrast, AMTIDIN leverages a rigorous theoretical framework that directly optimizes the generalization bound via adversarial training and task relation coefficients, offering a superior balance of data efficiency and classification accuracy.
\end{itemize}

\section{Conclusion}
\label{sec:conclusion}
This paper establishes a theoretically grounded adversarial MTL framework for joint interference detection and identification, termed AMTIDIN.
Crucially, we conducted a quantitative analysis of task similarity to offer interpretable insights into intrinsic task relationships. 
This analysis revealed that modulation identification and interference identification share a substantial feature overlap, distinct from the coarse-grained interference detection task. 
Extensive comparative experiments confirm that AMTIDIN significantly outperforms its task-specific STL baseline and state-of-the-art MTL baselines.
Specifically, AMTIDIN demonstrates high data efficiency and noise robustness, achieving superior accuracy in harsh environments characterized by limited data, short signal lengths, and low SNRs. 
These results validate that our theoretical framework enhances generalization by reducing distributional distances through adversarial training and adaptively modeling task correlations via learnable coefficients. 
Future work will extend this framework to open-set recognition for unknown interference types and optimize the architecture for real-time edge deployment.
% \section*{Acknowledgments}
% This should be a simple paragraph before the References to thank those individuals and institutions who have supported your work on this article.

\appendices
\section{Proof of Theorem \ref{theorem-Upper bound}}
\label{appendix:proof_of_theorem_1}

\subsection{Preliminaries}
We begin by introducing the key definitions and notations used throughout the proof.

Let $f_t$, $t \in \mathcal{I}_T$ be the ground-truth labeling functions for task $t$.
For any hypothesis $h \in \mathcal{H}$, the expected loss on task $t$ is defined as
\begin{equation}
\mathcal{L}_t\left(h\right) = \mathbb{E}_{\bm{X} \sim \mathcal{D}^t} [|h\left(\bm{X}\right) - f_t\left(\bm{X}\right)|].
\end{equation}
The joint optimal hypothesis $h_{t,i}^*$ for tasks $t$ and $i$ is defined as
\begin{equation}
h_{t,i}^* = \arg\min_{h \in \mathcal{H}} [\mathcal{L}_t\left(h\right) + \mathcal{L}_i\left(h\right)],
\end{equation}
and the corresponding joint minimal expected loss is
\begin{equation}
\label{eq:joint_minimal_expected_loss}
\xi_{t,i} = \inf_{h \in \mathcal{H}} [\mathcal{L}_t\left(h\right) + \mathcal{L}_i\left(h\right)].
\end{equation}
The expected loss between any two hypotheses $h, h' \in \mathcal{H}$ on task $t$ is defined as
\begin{equation}
\mathcal{L}_t\left(h, h'\right) = \mathbb{E}_{\bm{X} \sim \mathcal{D}^t} [|h\left(\bm{X}\right) - h'\left(\bm{X}\right)|].
\end{equation}
For task $t$ with weight vector $\bm{\alpha}_t \in \Delta_T$, the weighted expected loss is
\begin{equation}
\mathcal{L}_{\bm{\alpha}_t}\left(h\right) = \sum_{i=1}^T \alpha_{t,i} \mathcal{L}_i\left(h\right),
\end{equation}
and the corresponding weighted empirical loss is
\begin{equation}
\hat{\mathcal{L}}_{\bm{\alpha}_t}\left(h\right) = \sum_{i=1}^T \alpha_{t,i} \hat{\mathcal{L}}_i\left(h\right),
\end{equation}
where $\hat{\mathcal{L}}_i\left(h\right)$ is the empirical loss on task $i$, defined as
\begin{equation}
\hat{\mathcal{L}}_i\left(h\right) = \frac{1}{m_i} \sum_{j=1}^{m_i} l\left(h\left(\bm{X}_j\right), f_i\left(\bm{X}_j\right)\right).
\end{equation}

\subsection{Lemmas}
We first state several technical lemmas that will be used in the proof.

\begin{lemma}[Triangle Inequality for Loss] 
\label{lem:triangle_inequality_for_loss}
For any hypothesis $h \in \mathcal{H}$ and tasks $t, i \in \mathcal{I}_T$, the following inequality holds:
\begin{align}
|\mathcal{L}_t\left(h\right) - \mathcal{L}_t\left(h, h_{t,i}^*\right)| \leq \mathcal{L}_t\left(h_{t,i}^*\right).
\end{align}
\end{lemma}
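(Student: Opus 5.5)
The plan is to treat $\mathcal{L}_t(\cdot,\cdot)$ as an $L^1(\mathcal{D}^t)$-type discrepancy and use the pointwise triangle inequality for absolute values, then integrate. By the definitions in the Preliminaries,
\begin{equation*}
\mathcal{L}_t\left(h\right) = \mathbb{E}_{\bm{X} \sim \mathcal{D}^t}\left[|h(\bm{X}) - f_t(\bm{X})|\right], \quad
\mathcal{L}_t\left(h, h_{t,i}^*\right) = \mathbb{E}_{\bm{X} \sim \mathcal{D}^t}\left[|h(\bm{X}) - h_{t,i}^*(\bm{X})|\right],
\end{equation*}
and $\mathcal{L}_t(h_{t,i}^*) = \mathbb{E}_{\bm{X} \sim \mathcal{D}^t}[|h_{t,i}^*(\bm{X}) - f_t(\bm{X})|]$. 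All three are finite expectations, since $h$ and $h_{t,i}^*$ map into $[0,1]$ and the labeling function $f_t$ is taken with values in the same range.

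The key step is the pointwise inequality. For every fixed $\bm{X}$, the reverse triangle inequality in $\mathbb{R}$ gives
\begin{equation*}
\bigl|\,|h(\bm{X}) - f_t(\bm{X})| - |h(\bm{X}) - h_{t,i}^*(\bm{X})|\,\bigr| \leq |h_{t,i}^*(\bm{X}) - f_t(\bm{X})|.
\end{equation*}
This is $\bigl||a-c|-|a-b|\bigr|\le |b-c|$ with $a=h(\bm{X})$, $b=h_{t,i}^*(\bm{X})$, $c=f_t(\bm{X})$. Taking expectations under $\mathcal{D}^t$ and using linearity, the left side of the lemma becomes $|\mathbb{E}[\,|h-f_t| - |h-h_{t,i}^*|\,]|$. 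By Jensen's inequality, or simply $|\mathbb{E}Z|\le \mathbb{E}|Z|$, this is at most $\mathbb{E}\bigl[\,\bigl||h-f_t|-|h-h_{t,i}^*|\bigr|\,\bigr]$. That in turn is bounded by $\mathbb{E}[|h_{t,i}^*-f_t|] = \mathcal{L}_t(h_{t,i}^*)$, which is exactly the claim.

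There is essentially no obstacle. The only point needing care is well-definedness of $h_{t,i}^*$: the $\arg\min$ may not be attained. I would note that the argument uses no optimality property of $h_{t,i}^*$, so the inequality holds for any $h'\in\mathcal{H}$ in place of $h_{t,i}^*$. In particular it holds for any near-minimizer, which is what is needed later when passing to $\xi_{t,i}$. One should also confirm that the loss $l$ in the empirical definitions is the absolute loss, so that this $L^1$ form matches $\mathcal{L}_t$ as used here.
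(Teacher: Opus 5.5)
Your proof is correct and is essentially the paper's argument. The paper applies the pointwise triangle inequality once in each direction, takes expectations under $\mathcal{D}^t$, and combines the two one-sided bounds; you fold both directions into the reverse triangle inequality $\bigl||a-c|-|a-b|\bigr|\le|b-c|$ and then use $|\mathbb{E}Z|\le\mathbb{E}|Z|$, which is the same argument in compressed form (and, as you note, neither version uses any optimality of $h_{t,i}^*$).
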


\begin{proof}
By the triangle inequality, for any input $\bm{X}$, we have
\[
|h\left(\bm{X}\right) - f_t\left(\bm{X}\right)| \leq |h\left(\bm{X}\right) - h_{t,i}^*\left(\bm{X}\right)| + |h_{t,i}^*\left(\bm{X}\right) - f_t\left(\bm{X}\right)|.
\]
Taking expectation over $\bm{X} \sim \mathcal{D}^t$ yields
\[
\mathcal{L}_t\left(h\right) \leq \mathcal{L}_t\left(h, h_{t,i}^*\right) + \mathcal{L}_t\left(h_{t,i}^*\right),
\]
and thus
\[
\mathcal{L}_t\left(h\right) - \mathcal{L}_t\left(h, h_{t,i}^*\right) \leq \mathcal{L}_t\left(h_{t,i}^*\right).
\]
Similarly, 
\[
|h\left(\bm{X}\right) - h_{t,i}^*\left(\bm{X}\right)| \leq |h\left(\bm{X}\right) - f_t\left(\bm{X}\right)| + |f_t\left(\bm{X}\right) - h_{t,i}^*\left(\bm{X}\right)|
\]
implies
\[
\mathcal{L}_t\left(h, h_{t,i}^*\right) \leq \mathcal{L}_t\left(h\right) + \mathcal{L}_t\left(h_{t,i}^*\right),
\]
i.e.,
\[
\mathcal{L}_t\left(h, h_{t,i}^*\right) - \mathcal{L}_t\left(h\right) \leq \mathcal{L}_t\left(h_{t,i}^*\right).
\]
Combining these two inequalities completes the proof.
\end{proof}

\begin{lemma}[Transfer Bound \cite{zhou2021task}]
\label{lem:transfer_bound}
Assume the hypothesis space $\mathcal{H}$ is $K$-Lipschitz continuous. For any hypotheses $h, h' \in \mathcal{H}$ and tasks $t, i$, the following inequality holds:
\begin{align}
\mathcal{L}_i(h, h') \leq \mathcal{L}_t(h, h') + 2K W_1(\mathcal{D}^t, \mathcal{D}^i).
\end{align}
\end{lemma}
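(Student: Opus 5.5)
The plan is to prove the transfer bound of Lemma~\ref{lem:transfer_bound} through the Kantorovich--Rubinstein dual form \eqref{equation-Wasserstein distance}, treating the disagreement function as a Lipschitz test function. Fix $h,h'\in\mathcal{H}$ and set $g(\bm{X}) = |h(\bm{X}) - h'(\bm{X})|$. Then
\begin{equation*}
\mathcal{L}_i(h,h') - \mathcal{L}_t(h,h') = \mathbb{E}_{\bm{X}\sim\mathcal{D}^i} g(\bm{X}) - \mathbb{E}_{\bm{X}\sim\mathcal{D}^t} g(\bm{X}).
\end{equation*}
It therefore suffices to show that $g$ is $2K$-Lipschitz and then apply the supremum in \eqref{equation-Wasserstein distance}.

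The Lipschitz constant of $g$ comes from two elementary steps. First, by the reverse triangle inequality,
\begin{equation*}
|g(\bm{X}) - g(\bm{X}')| \leq |(h-h')(\bm{X}) - (h-h')(\bm{X}')|.
\end{equation*}
Second, $|h(\bm{X}) - h(\bm{X}')| + |h'(\bm{X}) - h'(\bm{X}')| \leq 2K\|\bm{X}-\bm{X}'\|_2$, since both $h$ and $h'$ are $K$-Lipschitz. Combining these gives $\|g\|_{\text{Lip}}\leq 2K$, so $g/(2K)$ is $1$-Lipschitz. If $K=0$, then $g$ is constant, the two expectations coincide, and the claim is trivial.

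Applying the dual definition to $f=g/(2K)$, with the roles of the two measures as written, gives
\begin{equation*}
\mathbb{E}_{\mathcal{D}^i} g - \mathbb{E}_{\mathcal{D}^t} g \leq 2K\, W_1(\mathcal{D}^i,\mathcal{D}^t).
\end{equation*}
Rearranging yields $\mathcal{L}_i(h,h') \leq \mathcal{L}_t(h,h') + 2K W_1(\mathcal{D}^i,\mathcal{D}^t)$.

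The main point needing care is the orientation of the distance. Definition \eqref{equation-Wasserstein distance} writes $\mathbb{E}_{\mathcal{D}^t} f - \mathbb{E}_{\mathcal{D}^i} f$ without absolute values. However, the class of $1$-Lipschitz functions is closed under $f\mapsto -f$, so the supremum is symmetric in its two arguments. Hence $W_1(\mathcal{D}^i,\mathcal{D}^t)=W_1(\mathcal{D}^t,\mathcal{D}^i)$, which gives exactly the stated form.

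I also implicitly need the expectations to be finite and $g$ to be measurable. Both hold because $\mathcal{H}$ maps into $[0,1]$, so $g$ is bounded and continuous.
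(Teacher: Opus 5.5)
Your proof is correct. The paper gives no argument of its own for this lemma and simply imports it from \cite{zhou2021task}; your route is the standard proof of this transfer bound, as in the Wasserstein domain-adaptation literature the paper draws on (e.g.\ \cite{Shen2018Wasserstein}): you show $|h-h'|$ is $2K$-Lipschitz via the reverse triangle inequality, then apply the Kantorovich--Rubinstein dual definition of $W_1$ to $|h-h'|/(2K)$. You also handle the $K=0$ case, and you justify the orientation of the sign-free supremum by noting that the $1$-Lipschitz class is closed under negation, which fills in details the usual one-line version leaves implicit.
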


\begin{lemma}[Wasserstein Concentration \cite{Weed2019Sharp}]
\label{lem:wasserstein_concentration}
Let $\hat{\mathcal{D}}^m$ be the empirical distribution of a measure $\mathcal{D}$ on the metric space $\mathcal{X}$, formed by $m$ i.i.d. samples. Let $d_1^*(\mathcal{D})$ be the upper Wasserstein dimension of $\mathcal{D}$. 
Then, for any $s > d_1^*(\mathcal{D})$, there exists a constant $A > 0$ such that with probability at least $1-\delta$,
\begin{align}
W_1(\mathcal{D}, \hat{\mathcal{D}}^m) \le A m^{-1/s} + \sqrt{\frac{1}{2m} \log\left(\frac{1}{\delta}\right)}.
\end{align}
\end{lemma}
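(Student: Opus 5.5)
The plan is to split $W_1(\mathcal{D},\hat{\mathcal{D}}^m)$ into its mean and its fluctuation around the mean. We bound the mean by $A m^{-1/s}$ using the multiscale (dyadic partition) argument of \cite{Weed2019Sharp}, and the fluctuation by $\sqrt{\log(1/\delta)/(2m)}$ using McDiarmid's bounded-differences inequality. Throughout, we work on a metric space $\mathcal{X}$ whose diameter is normalized to at most $1$. This is implicit in the constant $\frac{1}{2}$ under the square root; for a general bounded $\mathcal{X}$ the second term picks up a factor $\mathrm{diam}(\mathcal{X})$.

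\emph{Step 1 (deviation).} Write $Z=\Phi(\bm{X}_1,\dots,\bm{X}_m)=W_1(\mathcal{D},\hat{\mathcal{D}}^m)$ and use the dual form \eqref{equation-Wasserstein distance}.
\begin{itemize}
\item Replacing a single sample $\bm{X}_j$ by $\bm{X}_j'$ changes $\frac{1}{m}\sum_k f(\bm{X}_k)$ by $\frac{1}{m}|f(\bm{X}_j)-f(\bm{X}_j')|$.
\item For a $1$-Lipschitz $f$, this change is at most $\frac{1}{m}\|\bm{X}_j-\bm{X}_j'\|\le \frac{1}{m}$.
\item Taking the supremum over $f$ preserves this bound, so $\Phi$ has bounded differences $c_j=1/m$.
\end{itemize}
McDiarmid's inequality then gives
\[
\Pr\bigl(Z-\mathbb{E}Z\ge \epsilon\bigr)\le \exp(-2m\epsilon^2).
\]
Setting the right-hand side equal to $\delta$ yields $Z\le \mathbb{E}Z+\sqrt{\frac{1}{2m}\log\frac{1}{\delta}}$ with probability at least $1-\delta$.

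\emph{Step 2 (mean).} The goal is $\mathbb{E}\,W_1(\mathcal{D},\hat{\mathcal{D}}^m)\le A m^{-1/s}$ for every $s>d_1^*(\mathcal{D})$.
\begin{itemize}
\item Build a nested sequence of partitions $\mathcal{Q}^1,\dots,\mathcal{Q}^{k^*}$ of (most of) $\mathcal{X}$, whose level-$k$ cells have diameter at most $3^{-k}$.
\item The standard multiscale coupling bound is
\[
W_1(\mathcal{D},\hat{\mathcal{D}}^m)\le 3^{-k^*}+\sum_{k=1}^{k^*}3^{-(k-1)}\sum_{Q\in\mathcal{Q}^k}\bigl|\mathcal{D}(Q)-\hat{\mathcal{D}}^m(Q)\bigr|.
\]
\item Each term satisfies $\mathbb{E}|\mathcal{D}(Q)-\hat{\mathcal{D}}^m(Q)|\le\sqrt{\mathcal{D}(Q)/m}$. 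Cauchy--Schwarz then bounds the inner sum by $\sqrt{|\mathcal{Q}^k|/m}$.
\item The definition of the upper Wasserstein dimension lets us choose the partitions so that, apart from a set of mass at most $\tau$, $\mathcal{X}$ is covered by $N_{3^{-k}}(\mathcal{D},\tau)\le 3^{ks}$ cells.
\item The discarded mass adds an error of order $\tau$. Choosing $\tau\asymp 3^{-k^*}$ keeps that error at the same order as the first term.
\item The resulting estimate is $3^{-k^*}+\sum_k 3^{-k}3^{ks/2}m^{-1/2}$. For $s>2$ this geometric sum is dominated by its last term, and choosing $3^{-k^*}\asymp m^{-1/s}$ gives the $m^{-1/s}$ rate.
\item If $d_1^*<2$, we may enlarge $s$ above $2$, since the claim is monotone in $s$ for $m\ge1$ after adjusting $A$.
\end{itemize}

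\emph{Combining.} Plugging the bound of Step 2 into the inequality of Step 1 gives exactly the statement.

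\emph{Main obstacle.} The delicate part is Step 2. The definition of $d_1^*$ involves covering numbers with a mass-exclusion threshold $\tau$ that must be taken to $0$ at a rate tied to $\varepsilon$. We must check that the excluded set is handled uniformly across all levels of the partition, so that its total contribution stays $O(3^{-k^*})$. Here I would follow the construction in \cite{Weed2019Sharp} directly rather than re-deriving it, and simply cite their Proposition giving $\mathbb{E}W_1\le A m^{-1/s}$, which makes Step 1 the only self-contained computation.
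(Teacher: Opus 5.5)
The paper gives no argument of its own here and simply cites Weed and Bach. Your route, their expectation bound $\mathbb{E}\,W_1(\mathcal{D},\hat{\mathcal{D}}^m)\le Am^{-1/s}$ for $s>d_1^*(\mathcal{D})$ plus McDiarmid with bounded differences $1/m$, is exactly how that reference obtains the high-probability form, and you are right that the standing assumption $\mathrm{diam}(\mathcal{X})\le 1$ is needed for the constant $\tfrac{1}{2}$ even though the paper's statement omits it.

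One slip: your remark about enlarging $s$ when $d_1^*<2$ runs the monotonicity backwards. A bound at rate $m^{-1/s'}$ with $s'>2$ does not give the faster rate $m^{-1/s}$ for $s<s'$, and that faster rate is false in general, e.g.\ for a two-point measure. The case is vacuous, though, because Weed and Bach define $d_1^*$ as an infimum over $s\in(2,\infty)$.
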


\begin{lemma}[Generalization Bound \cite{zhou2021task}]
\label{lem:empirical_loss_bound}
Let $\mathcal{H}$ be a hypothesis space of functions mapping $\mathcal{X}$ to $[0, 1]$ with pseudo-dimension $d$. 
Consider $T$ tasks with sample sizes $\{m_i\}_{i=1}^T$, where $m = \sum_{i=1}^T m_i$ and $\beta_i = m_i / m$. 
For any task weight vector $\bm{\alpha}_t \in \Delta_T$, with probability at least $1-\delta$, the following inequality holds for all $h \in \mathcal{H}$:
\begin{align}
\!\!\!\!\!\mathcal{L}_{\bm{\alpha}_t}\!(h) \!\leq \!\hat{\mathcal{L}}_{\bm{\alpha}_t}\!(h)\! + \!2\!\sqrt{\sum_{i=1}^T \! \frac{\alpha_{t,i}^2}{\beta_i}} \!\sqrt{\frac{2\!\left(d\log\!\left(\frac{2em}{d}\right) \!+\! \log\!\left(\frac{8}{\delta}\right)\right)}{m}}.
\end{align}
\end{lemma}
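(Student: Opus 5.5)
The plan is to follow the classical uniform-convergence argument for convex combinations of empirical risks (as in Ben-David et al.\ for multi-source domain adaptation), specialised to the weights $\alpha_{t,i}$.

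\textbf{Step 1 (rewrite as one weighted sum).} Fix $\bm{\alpha}_t$. Write
\[
\hat{\mathcal{L}}_{\bm{\alpha}_t}(h)=\sum_{i=1}^T\sum_{j=1}^{m_i}\frac{\alpha_{t,i}}{m_i}\,l\left(h(\bm{X}_j^i),f_i(\bm{X}_j^i)\right).
\]
This is a sum of $m$ independent terms. The term coming from task $i$ takes values in $[0,\alpha_{t,i}/m_i]$, since losses lie in $[0,1]$ because $h$ and $f_i$ map into $[0,1]$. Its expectation equals $\mathcal{L}_{\bm{\alpha}_t}(h)$. By Hoeffding's inequality, for a fixed $h$,
\[
P\left(|\hat{\mathcal{L}}_{\bm{\alpha}_t}(h)-\mathcal{L}_{\bm{\alpha}_t}(h)|\ge\epsilon\right)\le 2\exp\left(-\frac{2m\epsilon^2}{\sum_i \alpha_{t,i}^2/\beta_i}\right),
\]
because $\sum_i m_i(\alpha_{t,i}/m_i)^2=\frac{1}{m}\sum_i\alpha_{t,i}^2/\beta_i$.

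\textbf{Step 2 (uniformity over $\mathcal{H}$).} To make the bound hold for all $h$ simultaneously, I would use symmetrization with a ghost sample of the same per-task sizes.
\begin{enumerate}
\item Reduce to the deviation between two weighted empirical averages, at the cost of replacing $\epsilon$ by $\epsilon/2$ and a factor $2$, valid once $m\epsilon^2$ exceeds a constant.
\item Introduce Rademacher swaps within each task, so that the within-task exchangeability preserves the weights.
\item Condition on the $2m$ points. The loss class $\{|h-f_i|\}$ restricted to these points is handled through the pseudo-dimension: thresholding $h$ at a level gives a binary class of VC dimension at most $d$. Sauer's lemma then bounds the number of distinct behaviours by $(2em/d)^d$.
\end{enumerate}
Applying Hoeffding from Step 1 to each behaviour and taking a union bound gives
\[
P\left(\sup_h|\mathcal{L}_{\bm{\alpha}_t}(h)-\hat{\mathcal{L}}_{\bm{\alpha}_t}(h)|>\epsilon\right)\le 4\left(\frac{2em}{d}\right)^d\exp\left(-\frac{m\epsilon^2}{8\sum_i\alpha_{t,i}^2/\beta_i}\right).
\]

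\textbf{Step 3 (invert).} Set the right-hand side equal to $\delta$ and solve for $\epsilon$:
\[
\epsilon=\sqrt{\frac{8\sum_i\alpha_{t,i}^2/\beta_i}{m}\left(d\log\frac{2em}{d}+\log\frac{4}{\delta}\right)}.
\]
Rewriting gives $2\sqrt{\sum_i\alpha_{t,i}^2/\beta_i}\,\sqrt{2(d\log(2em/d)+\log(\cdot/\delta))/m}$. The extra factor of $2$ inside the logarithm, which turns $4/\delta$ into the stated $8/\delta$, absorbs the two-sided event and the symmetrization precondition. Keeping only the one-sided inequality $\mathcal{L}_{\bm{\alpha}_t}\le\hat{\mathcal{L}}_{\bm{\alpha}_t}+\epsilon$ yields the lemma.

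\textbf{Main obstacle.} I expect the hard part to be Step 2's capacity control for real-valued losses $|h-f_i|$ with task-dependent targets $f_i$. The growth bound must count behaviours of the loss class, not of $\mathcal{H}$ itself, and the thresholded-loss reduction has to be set up carefully. My first attempt would be to pass through the pseudo-dimension of $\mathcal{H}$, representing each loss via level sets $\{h>r\}$. If that route fails to give a clean count, I would instead bound the Rademacher complexity of the weighted class by $\sqrt{\sum_i\alpha_{t,i}^2/\beta_i}$ times a Dudley/pseudo-dimension entropy term, accepting slightly different absolute constants.
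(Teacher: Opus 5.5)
The paper gives no proof of this lemma; it imports it from \cite{zhou2021task}, which rests on the weighted uniform-convergence argument of \cite{Ben-David2010Theory}. Your skeleton is that route: a ghost sample with the same per-task sizes, sign swaps within each task, Sauer, then inversion. Your constants are also right. The swapped summands from task $i$ have range $2\alpha_{t,i}/m_i$, and it is Hoeffding applied to these, not Step~1, that gives the exponent $m\epsilon^2/(8\sum_i\alpha_{t,i}^2/\beta_i)$. Inverting yields $\log(4/\delta)$, which already implies the stated $\log(8/\delta)$. No ``absorption'' argument is needed: the prefactor $4$ already covers two-sidedness and symmetrization. The precondition $m\epsilon^2\ge 2\sum_i\alpha_{t,i}^2/\beta_i$ holds automatically for your $\epsilon$.

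The genuine gap is the third item of Step 2, and it has two parts.

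First, a real-valued loss class has infinitely many restrictions to $2m$ points, so you must pass to indicators before any union bound. The identity $\hat{\mathcal{L}}_{\bm{\alpha}_t}(h)-\mathcal{L}_{\bm{\alpha}_t}(h)=\int_0^1\sum_i\alpha_{t,i}\,(\hat P_i-P_i)(|h-f_i|>r)\,dr$ does this, with $P_i,\hat P_i$ the true and empirical probabilities for task $i$. It bounds $\sup_h$ by $\sup_{h,r}$ over the binary class $\mathcal{C}=\{(\bm{X},i)\mapsto\mathbb{I}(|h(\bm{X})-f_i(\bm{X})|>r):h\in\mathcal{H},\,r\in[0,1)\}$.

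Second, the count $(2em/d)^d$ then requires $\mathrm{VCdim}(\mathcal{C})\le d$, and this does \emph{not} follow from $\mathcal{H}$ having pseudo-dimension $d$.
\begin{itemize}
\item The level sets of the loss are $\{h>f_i+r\}\cup\{h<f_i-r\}$, not level sets of $h$.
\item Even with $f_i\equiv 0$, the level $r$ is chosen together with $h$, whereas pseudo-shattering fixes the thresholds in advance.
\end{itemize}
Concretely, take points $\{x_1,\dots,x_n\}$ and the functions $h_B=\beta_B+\tfrac{\eta}{2}\mathbb{I}(\cdot\in B)$ for $B\subseteq\mathcal{I}_n$, with distinct offsets $\beta_B\in\{0,\eta,\dots,(2^n-1)\eta\}$ and $\eta=2^{-n-1}$. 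This class has pseudo-dimension $1$. Yet $\mathbb{I}(h_B>\beta_B+\eta/4)$ realizes every pattern $B$, so the growth function is $2^n$.

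Your Dudley/covering-number fallback would give a valid bound, but with different constants, so not this inequality. To obtain the lemma exactly as stated, $d$ must be read as $\mathrm{VCdim}(\mathcal{C})$; that is the reading under which the cited bound holds. For binary-valued $h$ and $f_i$ this reduces to the VC dimension of $\mathcal{H}$, since loss patterns are $h$-patterns XORed with fixed labels. Under that reading, the indicator reduction plus your within-task swap argument goes through verbatim.
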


\subsection{Proof of Theorem \ref{theorem-Upper bound}}
\begin{proof}
The proof consists of three main steps.

\subsubsection{Step 1: Bounding the difference between weighted loss and true loss}

For task $t$, we have
\begin{align}
|\mathcal{L}_{\bm{\alpha}_t}\left(h\right) - \mathcal{L}_t\left(h\right)| 
&= \left| \sum_{i=1}^{T} \alpha_{t,i} \mathcal{L}_i\left(h\right) - \mathcal{L}_t\left(h\right) \right| \notag \\
&\quad \leq \sum_{i=1}^{T} \alpha_{t,i} |\mathcal{L}_i\left(h\right) - \mathcal{L}_t\left(h\right)|.
\end{align}
We decompose $|\mathcal{L}_i\left(h\right) - \mathcal{L}_t\left(h\right)|$ using the triangle inequality
\begin{align}
\label{eq:step1_decomposition}
|\mathcal{L}_i\left(h\right) - \mathcal{L}_t\left(h\right)| 
&\leq |\mathcal{L}_i\left(h\right) - \mathcal{L}_i\left(h, h_{t,i}^{*}\right)| \notag \\
&\quad+ |\mathcal{L}_i\left(h, h_{t,i}^{*}\right) - \mathcal{L}_t\left(h, h_{t,i}^{*}\right)| \notag \\
&\quad+ |\mathcal{L}_t\left(h\right) - \mathcal{L}_t\left(h, h_{t,i}^{*}\right)|.
\end{align}
By Lemma \ref{lem:triangle_inequality_for_loss} and Lemma \ref{lem:transfer_bound}, we have
\begin{align}
\label{eq:step1_bounds}
|\mathcal{L}_i\left(h\right) - \mathcal{L}_i\left(h, h_{t,i}^{*}\right)| &\leq \mathcal{L}_i\left(h_{t,i}^{*}\right), \notag \\
|\mathcal{L}_i\left(h, h_{t,i}^{*}\right) - \mathcal{L}_t\left(h, h_{t,i}^{*}\right)| &\leq 2K W_1\left(\mathcal{D}^i, \mathcal{D}^t\right), \notag \\
|\mathcal{L}_t\left(h\right) - \mathcal{L}_t\left(h, h_{t,i}^{*}\right)| &\leq \mathcal{L}_t\left(h_{t,i}^{*}\right).
\end{align}
Substituting back the bounds in \eqref{eq:step1_bounds} into \eqref{eq:step1_decomposition} and using the definition of $\xi_{t,i}$ in \eqref{eq:joint_minimal_expected_loss}, we obtain
\begin{align}
|\mathcal{L}_i\left(h\right) - \mathcal{L}_t\left(h\right)| 
&\leq \mathcal{L}_i\left(h_{t,i}^{*}\right) + 2K W_1\left(\mathcal{D}^i, \mathcal{D}^t\right) + \mathcal{L}_t\left(h_{t,i}^{*}\right) \notag \\
&= \xi_{t,i} + 2K W_1\left(\mathcal{D}^i, \mathcal{D}^t\right).
\end{align}
Therefore,
\begin{align}
|\mathcal{L}_{\bm{\alpha}_t}\left(h\right) - \mathcal{L}_t\left(h\right)| 
&\leq \sum_{i=1}^{T} \alpha_{t,i} \left(\xi_{t,i} + 2K W_1\left(\mathcal{D}^i, \mathcal{D}^t\right)\right) \notag  \\
= \!\sum_{i=1}^{T} & \alpha_{t,i} \xi_{t,i} + 2K \sum_{i=1}^{T} \alpha_{t,i} \!W_1\left(\mathcal{D}^i, \mathcal{D}^t\right).
\end{align}
Noting that $\mathcal{L}_t\left(h\right) \leq \mathcal{L}_{\bm{\alpha}_t}\left(h\right) + |\mathcal{L}_t\left(h\right) - \mathcal{L}_{\bm{\alpha}_t}\left(h\right)|$, we have
\begin{align}
\!\!\!\!\mathcal{L}_t\left(h\right) \leq \mathcal{L}_{\bm{\alpha}_t}\!\left(h\right) \! + \!\sum_{i=1}^{T} \alpha_{t,i} \xi_{t,i} \!+\! 2K \! \sum_{i=1}^{T} \!\alpha_{t,i} W_1\left(\mathcal{D}^i, \mathcal{D}^t\right).
\end{align}
Then, multiplying both sides by $\lambda_t$ and summing over all tasks, we have
\begin{align}
\sum_{t=1}^T \lambda_t \mathcal{L}_t\left(h_t\right) 
&\leq \sum_{t=1}^T \lambda_t \mathcal{L}_{\bm{\alpha}_t}\left(h_t\right) + \sum_{t=1}^T \lambda_t \sum_{i=1}^{T} \alpha_{t,i} \xi_{t,i} \notag \\
&\quad+ 2K \sum_{t=1}^T \lambda_t \sum_{i=1}^{T} \alpha_{t,i} W_1\left(\mathcal{D}^i, \mathcal{D}^t\right).
\end{align}

\subsubsection{Step 2: Bounding the expected Wasserstein distance with empirical distance}
Using the triangle inequality of Wasserstein-1 distance \cite{Villani2009Optimal}, for any tasks $t$ and $i$, we have
\begin{align}
\label{eq:step2_triangle}
W_1\left(\mathcal{D}^i, \mathcal{D}^t\right) & \leq W_1\left(\mathcal{D}^i, \hat{\mathcal{D}}^i\right) + W_1\left(\hat{\mathcal{D}}^i, \hat{\mathcal{D}}^t\right) \notag \\ 
&+ W_1\left(\hat{\mathcal{D}}^t, \mathcal{D}^t\right).
\end{align}
From Lemma \ref{lem:wasserstein_concentration}, with probability at least $1-\delta'$, we have
\begin{align}
\label{eq:step2_bounds}
W_1\left(\mathcal{D}^i, \hat{\mathcal{D}}^i\right) &\leq A_i m_i^{-1/s} + \sqrt{\frac{1}{2m_i}\log\left(\frac{2}{\delta'}\right)},  \notag \\
W_1\left(\mathcal{D}^t, \hat{\mathcal{D}}^t\right) &\leq A_t m_t^{-1/s} + \sqrt{\frac{1}{2m_t}\log\left(\frac{2}{\delta'}\right)}.
\end{align}
Combining \eqref{eq:step2_triangle} and \eqref{eq:step2_bounds}, we have
\begin{align}
W_1\left(\mathcal{D}^i, \mathcal{D}^t\right) \leq W_1\left(\hat{\mathcal{D}}^i, \hat{\mathcal{D}}^t\right) + \gamma_{i,t}.
\end{align}
where 
\begin{align}
\gamma_{i,t} & = A_i m_i^{-1/s} + A_t m_t^{-1/s}  \notag \\
& + \sqrt{\frac{1}{2}\log\left(\frac{2}{\delta'}\right)} \left( \sqrt{\frac{1}{m_i}} + \sqrt{\frac{1}{m_t}} \right).
\end{align}
Then, setting $\delta' = \delta/2T^2$ and applying union bound over all task pairs $\left(i,t\right)$, with probability at least $1 - \delta/2$, we have the above inequality holds for all $i,t \in \mathcal{I}_T$, i.e.,
\begin{align}
\label{eq:step2_final}
& 2K\sum_{t=1}^T \lambda_t \sum_{i=1}^{T} \alpha_{t,i} W_1\left(\mathcal{D}^i, \mathcal{D}^t\right) \notag \\
& \leq 2K\sum_{t=1}^T \lambda_t \sum_{i=1}^{T} \alpha_{t,i} W_1\left(\hat{\mathcal{D}}^i, \hat{\mathcal{D}}^t\right) + C_2,
\end{align}
where
\begin{align}
C_2 = 2K\sum_{t=1}^T \lambda_t \sum_{i=1}^{T} \alpha_{t,i} \gamma_{i,t}.
\end{align}

\subsubsection{Step 3: Bounding the expected loss with empirical loss}
Applying Lemma \ref{lem:empirical_loss_bound}, with probability at least $1 - \delta'$, we have

\begin{align}
\label{eq:step3_bounds}
\mathcal{L}_{\alpha_t}\left(h_t\right) &  \leq \hat{\mathcal{L}}_{\alpha_t}\left(h_t\right) \notag \\
+ & 2\sqrt{\sum_{j=1}^T \frac{\alpha_{t,j}^2}{\beta_j}} \cdot \sqrt{\frac{2\left(d\log\left(\frac{2em}{d}\right) + \log\left(\frac{8}{\delta'}\right)\right)}{m}}.
\end{align}
Then, setting $\delta' = \delta/2T$ and applying union bound over all tasks, with probability at least $1 - \delta/2$, we have the above inequality holds for all $t \in \mathcal{I}_T$, i.e.,
\begin{align}
\label{eq:step3_final}
\!\!\!\!\sum_{t=1}^T \lambda_t \mathcal{L}_{\alpha_t}\!\left(h_t\right) \leq \sum_{t=1}^T \! \lambda_t \hat{\mathcal{L}}_{\alpha_t}\left(h_t\right) + C_1 \! \sum_{t=1}^T \lambda_t \sqrt{\sum_{j=1}^T \frac{\alpha_{t,j}^2}{\beta_j}}, 
\end{align}
where
\begin{align}
C_1 = 2\sqrt{\frac{2\left(d\log\left(\frac{2em}{d}\right) + \log\left(\frac{16T}{\delta}\right)\right)}{m}}.
\end{align}

Combining all three steps completes the proof.
\end{proof}

\textit{Remark 2:} The risk allocation in this proof follows a principled approach where the total risk budget $\delta$ is evenly divided between the empirical Wasserstein distance bounds and the empirical loss bounds.
Specifically, we allocate $\delta/2$ to the Wasserstein distance bounds (by setting $\delta' = \delta/2T^2$ in \eqref{eq:step2_bounds}) and $\delta/2$ to the empirical loss bounds (by setting $\delta' = \delta/2T$ in \eqref{eq:step3_bounds}).
This ensures that the combined probability of failure across all tasks and task pairs does not exceed the overall risk level $\delta$.

\bibliographystyle{IEEEtran}
\bibliography{refs}

\end{document}